\newtheorem{theorem}{Theorem}
\newtheorem{lemma}{Lemma}
\ifcvprfinal\pagestyle{empty}\fi
\begin{document}

\title{Online high rank matrix completion}

\author{Jicong Fan, Madeleine Udell\\
Cornell University\\
Ithaca, NY 14853, USA\\
{\tt\small $\left\{\textup{jf577,udell}\right\}$@cornell.edu}
}

\maketitle
\thispagestyle{empty}

\begin{abstract}
Recent advances in matrix completion enable data imputation
in full-rank matrices by exploiting low dimensional (nonlinear) latent structure.
In this paper, we develop a new model for high rank matrix completion (HRMC),
together with batch and online methods to fit the model and out-of-sample extension to complete new data.
The method works by (implicitly) mapping the data
into a high dimensional polynomial feature space using the kernel trick;
importantly, the data occupies a low dimensional subspace in this feature space,
even when the original data matrix is of full-rank.
We introduce an explicit parametrization of this low dimensional subspace,
and an online fitting procedure,
to reduce computational complexity compared to the state of the art.
The online method can also handle streaming or sequential data
and adapt to non-stationary latent structure.
We provide guidance on the sampling rate required these methods to succeed.
Experimental results on synthetic data and motion capture data
validate the performance of the proposed methods.
\end{abstract}

\section{Introduction}

In the past ten years, low rank matrix completion (LRMC) has been widely studied \cite{CandesRecht2009,5466511,mazumder2010spectral,MC_ShattenP_AAAI125165,MC_IRRN,hardt2014understanding,MC_Universal2014,liu2016low,Fan2017290}. For instance, Cand\`{e}s and Recht \cite{CandesRecht2009} showed that any $n\times n$ incoherent matrices of rank $r$ can be exactly recovered from $Cn^{1.2}r\log n$ uniformly randomly sampled entries with high probability through solving a convex problem of nuclear norm minimization (NNM). However, LRMC cannot recover high rank or full-rank matrices,
even when the the data lies on a low dimensional (nonlinear) manifold. 
To address this problem, recently a few researchers have developed new
high rank matrix completion (HRMC) methods \cite{eriksson2012high,li2016structured,yang2015sparse}
for data drawn from multiple subspaces \cite{SSC_PAMIN_2013,NIPS2016_6357,FAN2018SFMC}
or nonlinear models \cite{NLMC2016,pmlr-v70-ongie17a,FANNLMC}. These HRMC methods can outperform
LRMC methods for many real problems such as
subspace clustering with missing data,
motion data recovery \cite{NIPS2016_6357,pmlr-v70-ongie17a},
image inpainting, and classification \cite{NLMC2016,FANNLMC}.

All the aforementioned LRMC and HRMC methods are offline methods.
However, for many problems, we obtain one sample at a time and
would like to update the model as each new sample arrives using online optimization.
In addition, compared to offline methods,
online methods \cite{rendle2008online,mairal2009online,yun2015streaming}
often have lower space and time complexities and can adapt to changes in the latent data structure.
For these reasons, online matrix completion has recently gained increasing attention \cite{balzano2010online,dhanjal2014online,kawale2015efficient,lois2015online}.

\section{Related work and our contribution}
\paragraph{Online matrix completion.} Sun and Luo \cite{sun2016guaranteed} and Jin et al. \cite{jin2016provable}
proposed to use stochastic gradient descent (SGD) to solve the low rank factorization (LRF) problem $\text{minimize} \sum_{(i,j)\in\Omega}
\left(\bm{X}_{ij}-\bm{U}_{i:}\bm{V}_{j:}^\top\right)^2$ with variables
$\bm{U}\in\mathbb{R}^{m\times r}$, $\bm{V}\in\mathbb{R}^{n\times r}$,
where $\bm{X}\in\mathbb{R}^{m\times n}$
and $\Omega$ denotes the locations of observed entries of $\bm{X}$.
Specifically, given an entry $\bm{X}_{ij}$, the $i$-th row of $\bm{U}$ and $j$-th row of $\bm{V}$ are updated by gradient descent.
Yun et al. \cite{yun2015streaming} studied the streaming or online matrix completion problem
when the columns of the matrix are presented sequentially.
The GROUSE method proposed in \cite{balzano2010online} used incremental gradient descent
on the Grassmannian manifold of subspaces to learn a low rank factorization from incomplete data online.
These online methods have a common limitation: they cannot recover high rank matrices.
Mairal et al. \cite{mairal2009online} also studied the online factorization problem
with the goal of learning a dictionary for sparse coding: $\underset{{\bm{D}\in\mathcal{C},\bm{\alpha}}}{\text{minimize}}\tfrac{1}{2}\Vert\bm{x}-\bm{D}\bm{\alpha}\Vert^2+\lambda\Vert \bm{\alpha}\Vert_1$.
A sparse factorization based matrix completion algorithm was proposed in \cite{FAN2018SFMC}.
It is possible to recover a high rank matrix online by combining ideas from \cite{mairal2009online} with \cite{FAN2018SFMC}.

\paragraph{High rank matrix completion.} Elhamifar \cite{NIPS2016_6357} proposed to use group-sparse constraint to complete high rank matrix consisting of data drawn from union of low-dimensional subspaces. Alameda-Pineda et al. \cite{NLMC2016} proposed a nonlinear matrix completion method for classification. The method performs matrix completion on a matrix consisting of (nonlinear) feature-label pairs, where the unknown labels are regarded as missing entries. The method is inapplicable to general matrix completion problems in which the locations of all missing entries are not necessarily in a single block. Ongie et al. \cite{pmlr-v70-ongie17a} assumed $\bm{X}$ is given by an algebraic variety and proposed a method called VMC to recover the missing entries of $\bm{X}$ through minimizing the rank of $\phi(\bm{X})$, where $\phi(\bm{X})$ is a feature matrix given by polynomial kernel. Fan and Chow \cite{FANNLMC} assumed the data are drawn from a nonlinear latent variable model and proposed a nonlinear matrix completion method (NLMC) that minimizes the rank of $\phi(\bm{X})$, where $\phi(\bm{X})$ is composed of high-dimensional nonlinear features induced by polynomial kernel or RBF kernel.

\paragraph{Challenges in HRMC.} First, existing HRMC methods lack strong theoretical guarantee
on the sample complexity required for recovery. For example, in VMC, the authors provide a lower bound of sampling rate ($\rho_0$, equation (6) of \cite{pmlr-v70-ongie17a}) only for low-order polynomial kernel and $\rho_0$ involved an unknown parameter $R$ owing to the algebraic variety assumption. In NLMC \cite{FANNLMC}, the authors only provided a coarse lower bound of sampling rate, i.e. $\rho>O(d/m)$, where $d$ is the dimension of latent variables. Second, existing HRMC methods are not scalable to large matrices. For example, VMC and NLMC require singular value decomposition on an $n\times n$ kernel matrix in every iteration. The method of \cite{NIPS2016_6357} is also not efficient because of the sparse optimization on an $n\times n$ coefficients matrix. Third, existing HRMC methods have no out-of-sample extensions, which means they cannot efficiently complete new data. Last but not least, existing HRMC methods are offline methods and cannot handle online data.

\paragraph{Contributions.}
In this paper, we aim to address these challenges.
We propose a novel high rank matrix completion method based on kernelized factorization (KFMC). KFMC is more efficient and accurate than state-of-the-art methods. Second, we propose an online version for KFMC, which can outperform online LRMC significantly. Third, we propose an out-of-sample extension for KFMC, which enables us to use the pre-learned high rank model to complete new data directly.
Finally, we analyze the sampling rate required for KFMC to succeed.

\section{Methodology}
\subsection{High rank matrices}\label{sec.s1}
We assume the columns of $\bm{X}\in\mathbb{R}^{m\times n}$ are given by
\begin{equation}\label{Eq.nl_0}
\bm{x}=\textbf{f}(\bm{s})=[f_1(\bm{s}),f_2(\bm{s}),\cdots,f_m(\bm{s})]^\top,
\end{equation}
where $\bm{s}\in\mathbb{R}^{d}$ ($d\ll m<n$) consists of uncorrelated variables and
each $f_i: \mathbb{R}^{d}\rightarrow \mathbb{R}$, $i=1,\ldots,m$, is a $p$-order polynomial with random coefficients.
For example, when $d=2$ and $p=2$, for $i=1,\ldots,m$, $x_i=\bm{c}_i^\top\bar{\bm{s}}$, where $\bm{c}_i\in\mathbb{R}^{6}$ and $\bar{\bm{s}}=[1,s_1,s_2,s_1^2,s_2^2,s_1s_2]^\top$.
Lemma \ref{theorem_1} shows that $\bm{X}$ is of high rank when $p$ is large.
\begin{lemma}\label{theorem_1}
Suppose the columns of $\bm{X}$ satisfy (\ref{Eq.nl_0}).
Then with probability 1, $\textup{rank}(\bm{X})=\min\lbrace m,n,\binom{d+p}{p}\rbrace$.
\end{lemma}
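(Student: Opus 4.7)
The plan is to parametrize each column as a linear combination of a canonical monomial basis, turning $\bm{X}$ into a product of two matrices whose ranks can be analyzed separately, and then combine the two rank bounds.

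First, I would write $f_i(\bm{s}) = \bm{c}_i^\top \bm{\phi}(\bm{s})$, where $\bm{\phi}(\bm{s}) \in \mathbb{R}^{D}$ stacks all monomials $s_1^{\alpha_1}\cdots s_d^{\alpha_d}$ with $\alpha_1+\cdots+\alpha_d \le p$, so $D = \binom{d+p}{p}$, and $\bm{c}_i \in \mathbb{R}^D$ holds the coefficients of $f_i$. Stacking over $i=1,\dots,m$ and columns $j=1,\dots,n$ gives the factorization $\bm{X} = \bm{C}\bm{\Phi}$, where $\bm{C} \in \mathbb{R}^{m\times D}$ has rows $\bm{c}_i^\top$ and $\bm{\Phi}\in \mathbb{R}^{D\times n}$ has columns $\bm{\phi}(\bm{s}_j)$. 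This immediately gives the upper bound $\textup{rank}(\bm{X}) \le \min\{m,n,D\}$.

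Next I would establish the two matching lower bounds generically. Since the entries of $\bm{C}$ are drawn from a continuous distribution, any fixed $k\times k$ minor with $k \le \min(m,D)$ is a nontrivial polynomial in those entries and so is nonzero almost surely; hence $\textup{rank}(\bm{C}) = \min(m,D)$ with probability one. For $\bm{\Phi}$, the key fact is that the monomials $\{s_1^{\alpha_1}\cdots s_d^{\alpha_d}\}_{|\alpha|\le p}$ are linearly independent as functions on $\mathbb{R}^d$, so for $k \le \min(n,D)$ there exist points $\bm{s}_1^\ast,\dots,\bm{s}_k^\ast$ and a choice of $k$ monomials making the corresponding $k\times k$ minor of $\bm{\Phi}$ nonzero. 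That minor is a polynomial in the coordinates of $\bm{s}_1,\dots,\bm{s}_n$ that is not identically zero, hence vanishes on a set of Lebesgue measure zero; so $\textup{rank}(\bm{\Phi}) = \min(n,D)$ almost surely.

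The last step — combining the two rank facts into a rank for the product $\bm{C}\bm{\Phi}$ — is the main obstacle, since in general $\textup{rank}(AB)$ can be strictly less than $\min(\textup{rank}(A),\textup{rank}(B))$. I would handle the two regimes separately. If $n \le D$, then $\bm{\Phi}$ has full column rank, so it admits a left inverse $\bm{\Psi}$ with $\bm{\Psi}\bm{\Phi}=\bm{I}_n$; factoring $\bm{\Phi} = \bm{U}\bm{R}$ with $\bm{R}$ square invertible gives $\textup{rank}(\bm{X}) = \textup{rank}(\bm{C}\bm{U})$, and because $\bm{U}$ has full column rank the map $\bm{C}\mapsto \bm{C}\bm{U}$ pushes the continuous distribution on $\bm{C}$ to one with full support on $\mathbb{R}^{m\times n}$, hence $\textup{rank}(\bm{C}\bm{U}) = \min(m,n)$ a.s. If $n \ge D$, then $\bm{\Phi}$ has full row rank and admits a right inverse $\bm{\Psi}$ with $\bm{\Phi}\bm{\Psi} = \bm{I}_D$; then $\bm{C} = (\bm{C}\bm{\Phi})\bm{\Psi}$ gives $\textup{rank}(\bm{C}\bm{\Phi}) \ge \textup{rank}(\bm{C}) = \min(m,D)$, matching the upper bound. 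In both cases $\textup{rank}(\bm{X}) = \min\{m,n,D\}$ almost surely, which is the claim. Alternatively, a single Cauchy--Binet expansion of a $k\times k$ minor of $\bm{C}\bm{\Phi}$ with $k = \min\{m,n,D\}$ shows that minor is a nontrivial polynomial in the joint randomness and hence nonzero a.s., but splitting into the two cases feels cleaner.
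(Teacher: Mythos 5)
Your proof follows essentially the same route as the paper's: both write each column as $\bm{x}=\bm{C}\bar{\bm{s}}$ over the monomial basis of degree at most $p$, factor $\bm{X}=\bm{C}\bar{\bm{S}}$ with $\bm{C}\in\mathbb{R}^{m\times\binom{d+p}{p}}$, and argue both factors are generically full rank. The only difference is that the paper simply asserts ``hence $\textup{rank}(\bm{X})=\min\{m,n,\binom{d+p}{p}\}$,'' whereas you correctly note that the rank of a product is not in general the minimum of the ranks and supply the missing justification (the two-case left/right-inverse argument, or Cauchy--Binet), which is a welcome tightening rather than a different approach.
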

\begin{proof}
Expand the polynomial $f_i$ for each $i=1,\ldots,m$ to write $x_i=f_i(\bm{s})=\bm{c}_i^\top\bar{\bm{s}}$,
where $\bar{\bm{s}}=\{s_1^{\mu_1}\cdots s_d^{\mu_d}\}_{|\mu| \leq p}$
and $\bm{c}_i\in\mathbb{R}^{\binom{d+p}{p}}$.
Each column of $\bm{X}$ satisfies $\bm{x}=\bm{C}\bar{\bm{s}}$, where $\bm{C}=[\bm{c}_1 \cdots \bm{c}_m]\in\mathbb{R}^{m\times\binom{d+p}{p}}$.
The matrix $\bm{X}$ can be written as $\bm{X}=\bm{C}\bar{\bm{S}}$, where $\bar{\bm{S}}=(\bar{\bm{s}}_1^\top,\ldots,\bar{\bm{s}}_n^\top)\in\mathbb{R}^{\binom{d+p}{p}\times n}$.
The variables $\bm{s}$ are uncorrelated and the coefficients $\bm{c}$ are random,
so generically both $\bm{C}$ and $\bar{\bm{S}}$ are full rank.
Hence $\textup{rank}(\bm{X}) = \min\lbrace m,n, \binom{d+p}{p}\rbrace$.
\end{proof}

In this paper, our goal is to recover $\bm{X}$ from a few randomly sampled entries
we denote by $\lbrace\bm{M}_{ij}\rbrace_{(i,j)\in\Omega}$.
When $p$ is large, $\bm{X}$ is generically high rank and
cannot be recovered by conventional LRMC methods.

\vspace{2mm}
\noindent \textbf{Remark.} Throughout this paper,
we use the terms ``low rank'' or ``high rank" matrix to mean a matrix
whose rank is low or high \emph{relative} to its side length.
\vspace{2mm}

Let $\phi: \mathbb{R}^m\rightarrow \mathbb{R}^{\bar{m}}$
be a $q$-order polynomial feature map
$\phi(\bm{x})=\{x_1^{\mu_1} \cdots x_m^{\mu_m}\}_{|\mu|\leq q}$.
Here ${\bar{m}}=\binom{m+q}{q}$.
Write $\phi(\bm{X})=[\phi(\bm{x}_1),\phi(\bm{x}_2),\cdots,\phi(\bm{x}_n)]$
and consider its rank:
\begin{theorem}\label{theorem_2}
Suppose the columns of $\bm{X}$ satisfy (\ref{Eq.nl_0}).
Then with probability 1, $\textup{rank}(\phi(\bm{X}))=\min\lbrace\bar{m},n,\binom{d+pq}{pq}\rbrace$.
\end{theorem}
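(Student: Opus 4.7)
The plan is to follow the template of Lemma \ref{theorem_1}, with the working degree increased from $p$ to $pq$. The key algebraic observation is that composing a degree-$p$ map $\bm{s}\mapsto\bm{x}$ with a degree-$q$ map $\bm{x}\mapsto\phi(\bm{x})$ yields a polynomial map of degree $pq$; therefore each coordinate of $\phi(\bm{x})$, rewritten in terms of $\bm{s}$, is a polynomial in the $d$ latent variables of total degree at most $pq$.

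First I would expand each coordinate of $\phi(\bm{x})$ in the monomial basis $\{s_1^{\nu_1}\cdots s_d^{\nu_d}\}_{|\nu|\le pq}$, producing a factorization $\phi(\bm{x}) = \bm{D}\tilde{\bm{s}}$ with $\tilde{\bm{s}}\in\mathbb{R}^{\binom{d+pq}{pq}}$ and $\bm{D}\in\mathbb{R}^{\bar{m}\times\binom{d+pq}{pq}}$, where the entries of $\bm{D}$ are explicit polynomials in the coefficients $\{\bm{c}_i\}$ from Lemma \ref{theorem_1}. Stacking columns gives $\phi(\bm{X}) = \bm{D}\tilde{\bm{S}}$ with $\tilde{\bm{S}}\in\mathbb{R}^{\binom{d+pq}{pq}\times n}$, which immediately supplies the upper bound $\textup{rank}(\phi(\bm{X}))\le\min\{\bar{m},n,\binom{d+pq}{pq}\}$.

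The matching lower bound reduces to generic full-rankness of the two factors. The matrix $\tilde{\bm{S}}$ is a generalized Vandermonde evaluated at $n$ i.i.d.\ random points of $\mathbb{R}^d$, so a standard genericity argument on its maximal minor shows it attains full rank with probability $1$. The main obstacle is $\bm{D}$: I need the appropriate maximal-minor determinant — itself a polynomial in the $\bm{c}_i$'s — to be not identically zero. I would verify this by exhibiting one concrete choice of $\{\bm{c}_i\}$ that realizes the claimed rank, e.g.\ choosing the $f_i$'s so that the tuple $(f_1,\ldots,f_m)$ contains every degree-$\le p$ monomial in $\bm{s}$ (feasible when $m\ge\binom{d+p}{p}$, the regime of interest). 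For that choice $\phi(\bm{x})$ enumerates every degree-$\le pq$ monomial in $\bm{s}$, so $\bm{D}$ contains an identity block of the required size; the opposite regime is handled by $\bar{m}$ or $n$ becoming the binding dimension instead.

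The bad set — coefficient values where $\bm{D}$ or $\tilde{\bm{S}}$ fails to have maximal rank — is then a proper Zariski-closed subset of parameter space and hence has Lebesgue measure zero under any absolutely continuous law on the $\bm{c}_i$'s and $\bm{s}_j$'s. Therefore with probability $1$, $\textup{rank}(\phi(\bm{X})) = \textup{rank}(\bm{D}\tilde{\bm{S}}) = \min\{\bar{m},n,\binom{d+pq}{pq}\}$, as claimed.
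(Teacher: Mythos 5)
Your proposal follows essentially the same route as the paper: both factor $\phi(\bm{X})$ through the composed degree-$pq$ polynomial map as a coefficient matrix times a monomial-evaluation matrix $\tilde{\bm{S}}$, and conclude by generic full-rankness of the two factors. The only difference is that you spell out the genericity argument (a witness choice of coefficients plus the Zariski-closed/measure-zero step) that the paper compresses into ``as above, $\bm{\Psi}$ and $\tilde{\bm{S}}$ are generically full rank,'' so your write-up is a more detailed version of the same proof.
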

\begin{proof}
Define the $pq$-order polynomial map $\psi(\bm{s}):=\phi(\bm{x})=\phi(\textbf{f}(\bm{s}))$.
Expanding as above, write the vector $\phi(\bm{x})=\bm{\Psi}\tilde{s}$
with $\bm{\Psi}\in\mathbb{R}^{\bar{m}\times \binom{d+pq}{pq}}$ and $\tilde{\bm{s}}\in\mathbb{R}^{\binom{d+pq}{pq}}$,
and write the matrix $\phi(\bm{X})=\bm{\Psi}\tilde{\bm{S}}$ with $\tilde{\bm{S}}=(\tilde{\bm{s}}_1^\top,\ldots,\tilde{\bm{s}}_n^\top)\in\mathbb{R}^{\binom{d+pq}{pq}\times n}$.
As above, $\bm{\Psi}$ and $\tilde{\bm{S}}$ are generically full rank, so $\textup{rank}(\phi(\bm{X}))=\min\lbrace\bar{m},n,\binom{d+pq}{pq}\rbrace$ with probability 1.
\end{proof}

While $\textup{rank}(\phi(\bm{X})) \geq \textup{rank}(\bm{X})$,
Theorem \ref{theorem_2} shows that $\phi(\bm{X})$ is generically low rank
when $d$ is small and $n$ is large.
For example, when $d=2$, $m=20$, $n=200$, $p=4$, and $q=2$,
generically $\tfrac{\text{rank}(\bm{X})}{\min\lbrace m,n\rbrace}=0.75$
while $\tfrac{\text{rank}(\phi(\bm{X}))}{\min\lbrace \bar{m},n\rbrace}=0.225$:
$\bm{X}$ is high rank but $\phi(\bm{X})$ is low rank.

\subsection{Kernelized factorization}\label{sec.kf}
To recover the missing entries of $\bm{X}$, we propose to solve
\begin{equation}\label{Eq.mc_00}
\begin{array}{ll}
\mathop{\text{minimize}}_{\bm{X},\bm{A},\bm{Z}} & \tfrac{1}{2}\Vert\phi(\bm{X})-\bm{A}\bm{Z}\Vert_F^2\\
\mbox{subject to} & \bm{X}_{ij}=\bm{M}_{ij},\ (i,j)\in\Omega,
\end{array}
\end{equation}
where $\bm{A}\in\mathbb{R}^{\bar{m}\times r}$, $\bm{Z}\in\mathbb{R}^{r\times n}$, and $r=\binom{d+pq}{pq}$.
The solution to (\ref{Eq.mc_00}) completes the entries of $\bm{X}$
using the natural low rank structure of $\phi(\bm{X})$.
Problem (\ref{Eq.mc_00}) implicitly defines an estimate for $f$,
$\hat{\textbf{f}}(\bm{S}):=\bm{X}\approx\phi^{-1}(\bm{A}\bm{Z})$.

For numerical stability, we regularize $\bm{A}$ and $\bm{Z}$ and solve
\begin{equation}\label{Eq.mc_0}
\begin{aligned}
\mathop{\text{minimize}}_{\bm{X},\bm{A},\bm{Z}}\ &\tfrac{1}{2}\Vert\phi(\bm{X})-\bm{A}\bm{Z}\Vert_F^2+\tfrac{\alpha}{2}\Vert\bm{A}\Vert_F^2+\tfrac{\beta}{2}\Vert\bm{Z}\Vert_F^2,\\
\mbox{subject to}\ &\bm{X}_{ij}=\bm{M}_{ij},\ (i,j)\in\Omega,
\end{aligned}
\end{equation}
where $\alpha$ and $\beta$ are regularization parameters, instead of (\ref{Eq.mc_00}).

It is possible to solve (\ref{Eq.mc_0}) directly but the computational cost is quite high if $m$ and $q$ are large.
The following lemma shows that there is no need to model $\bm{A}$ explicitly.
\begin{lemma}\label{lem_AD}
For any $\bm{X}$ generated by (\ref{Eq.nl_0}), there exist $\bm{D}\in\mathbb{R}^{m\times r}$ and $\bm{Z}\in\mathbb{R}^{r\times n}$ such that $\phi(\bm{X})=\phi(\bm{D})\bm{Z}$.
\end{lemma}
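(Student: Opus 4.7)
The plan is to realize $\phi(\bm{D})$ as a basis for the column space of $\phi(\bm{X})$ by choosing $\bm{D}$ to consist of $r$ columns produced by the same generative model as $\bm{X}$. From the proof of Theorem \ref{theorem_2}, any $\bm{x}$ drawn from (\ref{Eq.nl_0}) satisfies $\phi(\bm{x})=\bm{\Psi}\tilde{\bm{s}}$, where the matrix $\bm{\Psi}\in\mathbb{R}^{\bar{m}\times r}$ is determined by the polynomial coefficients $\bm{C}$ of $\textbf{f}$, and $\tilde{\bm{s}}\in\mathbb{R}^{r}$ is the degree-$pq$ monomial embedding of the latent vector $\bm{s}$. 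Hence $\phi(\bm{X})=\bm{\Psi}\tilde{\bm{S}}$.

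First I would construct a candidate $\bm{D}$ explicitly: pick $r$ auxiliary latent vectors $\bm{s}_1',\ldots,\bm{s}_r'\in\mathbb{R}^{d}$ and define the columns of $\bm{D}\in\mathbb{R}^{m\times r}$ by $\bm{d}_j=\textbf{f}(\bm{s}_j')$, using the same coefficient matrix $\bm{C}$ that produced $\bm{X}$. Applying the same argument as in Theorem \ref{theorem_2} gives $\phi(\bm{D})=\bm{\Psi}\tilde{\bm{S}}_D$, where $\tilde{\bm{S}}_D\in\mathbb{R}^{r\times r}$ has $\tilde{\bm{s}}_j'$ as its $j$-th column. Provided the $\bm{s}_j'$ are chosen so that $\tilde{\bm{S}}_D$ is invertible, we obtain $\bm{\Psi}=\phi(\bm{D})\tilde{\bm{S}}_D^{-1}$, and then setting $\bm{Z}:=\tilde{\bm{S}}_D^{-1}\tilde{\bm{S}}\in\mathbb{R}^{r\times n}$ yields the desired identity $\phi(\bm{X})=\phi(\bm{D})\bm{Z}$.

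The main obstacle is verifying that such a choice of $\bm{s}_1',\ldots,\bm{s}_r'$ exists, equivalently that $\det(\tilde{\bm{S}}_D)$ is not identically zero as a polynomial in the coordinates of the $\bm{s}_j'$. This is exactly the statement that the $r=\binom{d+pq}{pq}$ monomials $\{s_1^{\mu_1}\cdots s_d^{\mu_d}\}_{|\mu|\leq pq}$ are linearly independent as functions of $\bm{s}\in\mathbb{R}^d$, so their joint evaluation at $r$ generic nodes is nonsingular (the standard unisolvence fact from multivariate polynomial interpolation). Since a nonzero polynomial vanishes only on a set of Lebesgue measure zero, almost every choice of $(\bm{s}_1',\ldots,\bm{s}_r')$ works, which proves existence of $\bm{D}$ and $\bm{Z}$.

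I would close the proof with the brief remark that the construction is agnostic to the specific realization of $\bm{X}$: only the latent dimension $d$, the polynomial orders $p,q$, and the coefficient matrix $\bm{C}$ enter into $\bm{\Psi}$, so the same $\bm{D}$ works uniformly for every $\bm{X}$ generated by (\ref{Eq.nl_0}) with those parameters, which matches the quantifier in the lemma statement.
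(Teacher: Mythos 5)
Your proof is correct and takes essentially the same route as the paper's: both choose $\bm{D}$ from the same generative model as $\bm{X}$ so that $\phi(\bm{D})$ and $\phi(\bm{X})$ factor through a common $\bar{m}\times r$ basis matrix ($\bm{\Psi}$ in your notation, $\bm{B}$ in the paper's), and then define $\bm{Z}$ by inverting the resulting $r\times r$ coefficient matrix of $\phi(\bm{D})$. You are somewhat more explicit than the paper in justifying the genericity step---invoking unisolvence of the degree-$pq$ monomial basis to show $\tilde{\bm{S}}_D$ is invertible for almost every choice of nodes, where the paper simply asserts that $\bm{C}_{\scalebox{.6}{$\bm{D}$}}$ is generically full rank---but the underlying argument is the same.
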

\begin{proof}
Suppose $\bm{D}\in\mathbb{R}^{m\times r}$ are also generated by (\ref{Eq.nl_0}) (e.g. any $r$ columns of $\bm{X}$),
so $\phi(\bm{D})$ and $\phi(\bm{X})$ share their column space
and (with probability 1) $\phi(\bm{D})$ is full rank. More precisely,
$\phi(\bm{D})=\bm{B}\bm{C}_{\scalebox{.6}{$\bm{D}$}}$ and $\phi(\bm{X})=\bm{B}\bm{C}_{\scalebox{.6}{$\bm{X}$}}$,
where $\bm{B}\in\mathbb{R}^{\bar{m}\times r}$ is a basis for the column space,
and both $\bm{C}_{\scalebox{.6}{$\bm{X}$}}\in\mathbb{R}^{r\times n}$
and $\bm{C}_{\scalebox{.6}{$\bm{D}$}}\in\mathbb{R}^{r\times r}$ are full rank.
Define $\bm{Z}=\bm{C}_{\scalebox{.6}{$\bm{D}$}}^{-1}\bm{C}_{\scalebox{.6}{$\bm{X}$}}$ and the result follows.
\end{proof}

\noindent Hence any solution of the following also solves (\ref{Eq.mc_0}):
\begin{align}
\mathop{\text{minimize}}_{\bm{X},\bm{D},\bm{Z}}\ &\tfrac{1}{2}\Vert\phi(\bm{X})-\phi(\bm{D})\bm{Z}\Vert_F^2+\tfrac{\alpha}{2}\Vert\phi(\bm{D})\Vert_F^2+\tfrac{\beta}{2}\Vert\bm{Z}\Vert_F^2 \nonumber\\
\text{subject to}\ &\bm{X}_{ij}=\bm{M}_{ij},\ (i,j)\in\Omega, \label{Eq.mc_1}
\end{align}
where $\bm{D}\in\mathbb{R}^{m\times r}$ is much smaller than $\bm{A}\in\mathbb{R}^{\bar{m}\times r}$ of (\ref{Eq.mc_0}).
Use the trace function $\text{Tr}$ to
rewrite the objective in (\ref{Eq.mc_1}) as
\begin{align*}
&\tfrac{1}{2}\text{Tr}\left(
\phi(\bm{X})^\top\phi(\bm{X})-2\phi(\bm{X})^\top\phi(\bm{D})\bm{Z}+\bm{Z}^\top\phi(\bm{D})^\top\phi(\bm{D})\bm{Z}\right) \\
&+\tfrac{\alpha}{2}\text{Tr}\left(\phi(\bm{D})^\top\phi(\bm{D})\right)+\tfrac{\beta}{2}\Vert\bm{Z}\Vert_F^2.
\end{align*}

Now we use the kernel trick to avoid explicitly computing the feature map $\phi$. Define $k(\bm{x},\bm{y}) := \phi(\bm{x})^\top\phi(\bm{y})=\langle\phi(\bm{x}),\phi(\bm{y})\rangle$,
so $\phi(\bm{X})^\top\phi(\bm{X})=\bm{K}_{\scalebox{.6}{$\bm{X}\bm{X}$}}$, $\phi(\bm{X})^\top\phi(\bm{D})=\bm{K}_{\scalebox{.6}{$\bm{X}\bm{D}$}}$, and $\phi(\bm{D})^\top\phi(\bm{D})=\bm{K}_{\scalebox{.6}{$\bm{D}\bm{D}$}}$, where $\bm{K}_{\scalebox{.6}{$\bm{X}\bm{X}$}}$, $\bm{K}_{\scalebox{.6}{$\bm{X}\bm{D}$}}$, and $\bm{K}_{\scalebox{.6}{$\bm{D}\bm{D}$}}$ are the corresponding kernel matrices. The most widely-used kernels are the polynomial kernel (Poly) and the radial basis function kernel (RBF)
\begin{equation}
\begin{aligned}
&\mbox{Poly}:\ k(\bm{x},\bm{y})=(\bm{x}^\top\bm{y}+c)^q\\
&\mbox{RBF}:\ k(\bm{x},\bm{y})=\exp\left(-\tfrac{1}{\sigma^2}\Vert \bm{x}-\bm{y}\Vert^2\right),
\end{aligned}
\end{equation}
with hyperparameters $c$, $q$, and $\sigma$. The (implicit) feature maps $\phi(\bm{x})$
of Poly and RBF are the $q$-order and infinite-order polynomial maps respectively.
Rewrite (\ref{Eq.mc_1}) to define kernelized factorizatiom matrix completion (KFMC)
\begin{equation}\label{Eq.mc_2} \tag{KFMC}
\begin{aligned}
\mathop{\text{minimize}}_{\bm{X},\bm{D},\bm{Z}}\ &\ell(\bm{Z},\bm{D},\bm{X})\\
\text{subject to}\ &\bm{X}_{ij}=\bm{M}_{ij},\ (i,j)\in\Omega
\end{aligned}
\end{equation}
where
$\ell(\bm{Z},\bm{D},\bm{X})=\tfrac{1}{2}\text{Tr}\left(
\bm{K}_{\scalebox{.6}{$\bm{X}\bm{X}$}} - 2\bm{K}_{\scalebox{.6}{$\bm{X}\bm{D}$}}\bm{Z}+\bm{Z}^\top\bm{K}_{\scalebox{.6}{$\bm{D}\bm{D}$}}\bm{Z}
\right)
+\tfrac{\alpha}{2}\text{Tr}(\bm{K}_{\scalebox{.6}{$\bm{D}\bm{D}$}})+\tfrac{\beta}{2}\Vert\bm{Z}\Vert_F^2.$
For the RBF kernel, $\text{Tr}(\bm{K}_{\scalebox{.6}{$\bm{D}\bm{D}$}})\equiv r$ is a constant
and can be dropped from the objective.

\subsection{Optimization for KFMC}
The optimization problem (\ref{Eq.mc_2}) is nonconvex and has three blocks of variables.
We propose using coordinate descent over these three blocks to find a stationary point.

\paragraph{Update $\bm{Z}$.} To begin, complete entries of $\bm{X}$ arbitrarily and randomly initialize $\bm{D}$.
Define the $r \times r$ identity $\bm{I}_r$.
Fix $\bm{X}$ and $\bm{D}$ and update $\bm{Z}$ as
\begin{align}
\bm{Z}&\leftarrow\mathop{\text{arg min}}_{\bm{Z}}\ell(\bm{Z},\bm{D},\bm{X}) \nonumber\\
&=\mathop{\text{arg min}}_{\bm{Z}}-\text{Tr}(\bm{K}_{\scalebox{.6}{$\bm{X}\bm{D}$}}\bm{Z})+\tfrac{1}{2}\text{Tr}(\bm{Z}^\top\bm{K}_{\scalebox{.6}{$\bm{D}\bm{D}$}}\bm{Z})+\tfrac{\beta}{2}\Vert\bm{Z}\Vert_F^2 \nonumber\\
&=(\bm{K}_{\scalebox{.6}{$\bm{D}\bm{D}$}}+\beta \bm{I}_r)^{-1}\bm{K}_{\scalebox{.6}{$\bm{X}\bm{D}$}}^\top, \label{Eq.offline_Z_1}
\end{align}

\paragraph{Update $\bm{D}$.}
There is no closed form solution for the minimization of $\ell(\bm{Z},\bm{D},\bm{X})$ with respect to $\bm{D}$
due to the kernel matrices.
Instead, we propose the additive update
$\bm{D}\leftarrow\bm{D}-\bm{\Delta}_{\scalebox{.6}{$\bm{D}$}}$.
We compute $\bm{\Delta}_{\scalebox{.6}{$\bm{D}$}}$ using a relaxed Newton method,
described below for the Poly and RBF kernels.

For the polynomial kernel, rewrite the terms in the objective
in which $\bm{D}$ appears as
\begin{align}\label{Eq_objD}
\begin{aligned}
\ell(\bm{Z},\bm{D},\bm{X}):=&-\text{Tr}((\bm{W}_1\odot(\bm{X}^T\bm{D}+c))\bm{Z})\\
&+\tfrac{1}{2}\text{Tr}(\bm{Z}^T(\bm{W}_2\odot(\bm{D}^T\bm{D}+c))\bm{Z})\\
&+\tfrac{\alpha}{2}\text{Tr}(\bm{W}_2\odot(\bm{D}^T\bm{D}+c)).
\end{aligned}
\end{align}
defining $\bm{W}_1=\langle \bm{X}^\top\bm{D}+c\rangle^{q-1}$ and $\bm{W}_2=\langle\bm{D}^\top\bm{D}+c\rangle^{q-1}$,
where $\odot$ is elementwise multiplication and $\langle \cdot \rangle^u$ denotes the element-wise $u$-power.
Inspired by iteratively reweighted optimization,
fix $\bm{W}_1$ and $\bm{W}_2$ to approximate the gradient and Hessian
of $\ell(\bm{Z},\bm{D},\bm{X})$ with respect to $\bm{D}$ as 
\begin{align*}
\bm{g}_{\scalebox{.6}{$\bm{D}$}} &:= -\bm{X}(\bm{W}_1\odot\bm{Z}^\top)+\bm{D}((\bm{Z}\bm{Z}^\top + \alpha \bm{I}_r \odot\bm{W}_2)) \\
\bm{H}_{\scalebox{.6}{$\bm{D}$}} &:= \bm{Z}\bm{Z}^\top\odot\bm{W}_2+\alpha\bm{W}_2\odot\bm{I}_r.
\end{align*}
$\bm{H}_{\scalebox{.6}{$\bm{D}$}}$ is positive definite by the Schur product theorem.
Now choose $\tau>1$ for numerical stability and define the update
\begin{equation}\label{Eq.offline_D_delta_1}
\bm{\Delta}_{\scalebox{.6}{$\bm{D}$}}:=\tfrac{1}{\tau}\bm{g}_{\scalebox{.6}{$\bm{D}$}}\bm{H}_{\scalebox{.6}{$\bm{D}$}}^{-1}.
\end{equation}
The effectiveness of our update for $\bm{D}$ is guaranteed by the following lemma.
(The proof of the lemma and discussion about the role of $\tau$ are in the supplementary material.)
\begin{lemma}\label{Lem_offline_D_1}
The update (\ref{Eq.offline_D_delta_1}) is a relaxed Newton's method
and ensures sufficient decrease in the objective:
\[
\ell(\bm{Z},\bm{D}-\bm{\Delta}_{\scalebox{.6}{$\bm{D}$}},\bm{X})-\ell(\bm{Z},\bm{D},\bm{X})\leq-\tfrac{1}{2\tau}\textup{Tr}(\bm{g}_{\scalebox{.6}{$\bm{D}$}}\bm{H}_{\scalebox{.6}{$\bm{D}$}}^{-1}\bm{g}_{\scalebox{.6}{$\bm{D}$}}^\top).
\]
\end{lemma}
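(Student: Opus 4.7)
The plan is to recognize this as a majorization-minimization (or quasi-Newton) descent result, where the update $\bm{\Delta}_{\scalebox{.6}{$\bm{D}$}} = \tfrac{1}{\tau}\bm{g}_{\scalebox{.6}{$\bm{D}$}}\bm{H}_{\scalebox{.6}{$\bm{D}$}}^{-1}$ is exactly the minimizer of a quadratic surrogate built from the iteratively reweighted form (\ref{Eq_objD}). Concretely, freezing $\bm{W}_1$ and $\bm{W}_2$ at the current $\bm{D}$ makes (\ref{Eq_objD}) quadratic in $\bm{D}$, and its gradient and Hessian (viewed as a matrix operator) are precisely $\bm{g}_{\scalebox{.6}{$\bm{D}$}}$ and $\bm{H}_{\scalebox{.6}{$\bm{D}$}}$; hence the step is a damped Newton step, which justifies the label ``relaxed Newton's method.''

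First I would introduce the quadratic surrogate
\begin{equation*}
Q(\bm{D}';\bm{D}) := \ell(\bm{Z},\bm{D},\bm{X}) + \textup{Tr}\bigl(\bm{g}_{\scalebox{.6}{$\bm{D}$}}^\top(\bm{D}'-\bm{D})\bigr) + \tfrac{\tau}{2}\textup{Tr}\bigl((\bm{D}'-\bm{D})\bm{H}_{\scalebox{.6}{$\bm{D}$}}(\bm{D}'-\bm{D})^\top\bigr),
\end{equation*}
and establish the two ingredients of an MM argument: (i) the tangency $Q(\bm{D};\bm{D})=\ell(\bm{Z},\bm{D},\bm{X})$, which is immediate, and (ii) the majorization $\ell(\bm{Z},\bm{D}',\bm{X}) \leq Q(\bm{D}';\bm{D})$ for all $\bm{D}'$ in a neighborhood of $\bm{D}$. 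The positive definiteness of $\bm{H}_{\scalebox{.6}{$\bm{D}$}}$ (Schur product theorem, as already noted) makes $Q$ strictly convex with unique minimizer $\bm{D} - \tfrac{1}{\tau}\bm{g}_{\scalebox{.6}{$\bm{D}$}}\bm{H}_{\scalebox{.6}{$\bm{D}$}}^{-1} = \bm{D} - \bm{\Delta}_{\scalebox{.6}{$\bm{D}$}}$, and a short calculation gives $Q(\bm{D}-\bm{\Delta}_{\scalebox{.6}{$\bm{D}$}};\bm{D}) - \ell(\bm{Z},\bm{D},\bm{X}) = -\tfrac{1}{2\tau}\textup{Tr}(\bm{g}_{\scalebox{.6}{$\bm{D}$}}\bm{H}_{\scalebox{.6}{$\bm{D}$}}^{-1}\bm{g}_{\scalebox{.6}{$\bm{D}$}}^\top)$. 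Chaining this with the majorization inequality yields exactly the stated sufficient-decrease bound.

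The main obstacle is verifying the majorization itself, because the true Hessian of $\ell(\bm{Z},\cdot,\bm{X})$ contains extra terms coming from differentiating the weights $\bm{W}_1=\langle\bm{X}^\top\bm{D}+c\rangle^{q-1}$ and $\bm{W}_2=\langle\bm{D}^\top\bm{D}+c\rangle^{q-1}$ through $\bm{D}$, which the reweighted gradient $\bm{g}_{\scalebox{.6}{$\bm{D}$}}$ ignores. The strategy I would use is to bound the spectrum of these extra curvature terms by a constant multiple of $\bm{H}_{\scalebox{.6}{$\bm{D}$}}$, exploiting the fact that the kernel entries $(\bm{X}^\top\bm{D}+c)^{q}$ and $(\bm{D}^\top\bm{D}+c)^{q}$ are themselves Hadamard products of the base factors with the weights, so their full Hessians differ from the reweighted one only by a factor depending on $q$; choosing the relaxation constant $\tau>1$ large enough to absorb this factor gives $\tau\bm{H}_{\scalebox{.6}{$\bm{D}$}}\succeq \nabla^2\ell$ on the relevant region and validates the upper bound.

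For the RBF kernel the argument is analogous but runs through a one-term Taylor expansion of $\exp(-\Vert\bm{x}-\bm{d}\Vert^2/\sigma^2)$: the reweighted gradient/Hessian arise from freezing the exponential factors, and $\tau>1$ again compensates for the omitted curvature, so the same surrogate-minimization calculation yields the identical decrease inequality. I would defer the precise constants and the derivation of the minimal admissible $\tau$ to the supplementary material, as the lemma statement already indicates.
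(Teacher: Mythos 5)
Your proposal follows essentially the same route as the paper's proof: freeze $\bm{W}_1,\bm{W}_2$ to obtain the reweighted gradient and Hessian, build the quadratic surrogate with curvature $\tau\bm{H}_{\scalebox{.6}{$\bm{D}$}}$, observe that its unique minimizer is $\bm{D}-\bm{\Delta}_{\scalebox{.6}{$\bm{D}$}}$, and substitute back to obtain the $-\tfrac{1}{2\tau}\textup{Tr}(\bm{g}_{\scalebox{.6}{$\bm{D}$}}\bm{H}_{\scalebox{.6}{$\bm{D}$}}^{-1}\bm{g}_{\scalebox{.6}{$\bm{D}$}}^\top)$ decrease. If anything, you are more candid than the paper about the one genuine gap --- the paper simply asserts the majorization $\ell\leq\ell'$ ``provided that $\tau$ is large enough,'' whereas you explicitly flag that the omitted curvature from differentiating $\bm{W}_1$ and $\bm{W}_2$ must be absorbed into $\tau$, which is the step neither argument fully carries out.
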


For the RBF kernel, the gradient is
\begin{equation}\label{Eq.offline_D_rbf0}
\nabla_{\scalebox{.6}{$\bm{D}$}}\ell=\tfrac{1}{\sigma^2}(\bm{X}\bm{Q}_1-\bm{D}\bm{\Gamma}_1)+\tfrac{2}{\sigma^2}(\bm{D}\bm{Q}_2-\bm{D}\bm{\Gamma}_2).
\end{equation}
(Throughout, we abuse notation to write $\ell$ for $\ell(\bm{Z},\bm{D},\bm{X})$.)
Here $\bm{Q}_1=-\bm{Z}^\top\odot\bm{K}_{\scalebox{.6}{$\bm{X}\bm{D}$}}$, $\bm{Q}_2=(0.5\bm{Z}\bm{Z}^\top+0.5\alpha\bm{I}_r)\odot\bm{K}_{\scalebox{.6}{$\bm{D}\bm{D}$}}$, $\bm{\Gamma}_1=\mbox{diag}(\bm{1}_n^\top\bm{Q}_1)$, and $\bm{\Gamma}_2=\mbox{diag}(\bm{1}_r^\top\bm{Q}_2)$, where $\bm{1}_n\in\mathbb{R}^n$ and $\bm{1}_r\in\mathbb{R}^r$ are composed of 1s. The following lemma (proved in the supplementary material) indicates that $\bm{X}\bm{Q}_1$ in (\ref{Eq.offline_D_rbf0}) is nearly a constant compared to $\bm{D}\bm{\Gamma}_1$, $\bm{D}\bm{Q}_2$, and $\bm{D}\bm{\Gamma}_2$, provided that $\sigma$ and $n$ are large enough:
\begin{lemma}\label{Lem_RBF_smallH1}
$\Vert\bm{X}(\bm{Z}^\top\odot\bm{K}_{\scalebox{.6}{$\bm{X}\bm{D}_1$}})-\bm{X}(\bm{Z}^\top\odot\bm{K}_{\scalebox{.6}{$\bm{X}\bm{D}_2$}})\Vert_F\leq\tfrac{c}{\sigma\sqrt{n}}\Vert \bm{X}\Vert_2\Vert\bm{D}_1-\bm{D}_2\Vert_F$, where $c$ is a small constant.
\end{lemma}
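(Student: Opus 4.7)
The plan is to reduce the claim to a Lipschitz estimate on the RBF kernel, isolate $\|\bm{X}\|_2$ by submultiplicativity, and then use the averaging structure over $n$ data points. First I would use linearity in the second slot to write the left-hand side as $\|\bm{X}\bigl[\bm{Z}^\top \odot (\bm{K}_{\bm{X}\bm{D}_1} - \bm{K}_{\bm{X}\bm{D}_2})\bigr]\|_F$ and apply $\|\bm{X}\bm{M}\|_F \leq \|\bm{X}\|_2\,\|\bm{M}\|_F$ with $\bm{M} := \bm{Z}^\top \odot (\bm{K}_{\bm{X}\bm{D}_1} - \bm{K}_{\bm{X}\bm{D}_2})$. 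This peels off the $\|\bm{X}\|_2$ factor exactly and reduces the task to proving $\|\bm{M}\|_F \leq \tfrac{c}{\sigma\sqrt{n}}\,\|\bm{D}_1-\bm{D}_2\|_F$.

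The crucial ingredient is the Lipschitz constant of $d \mapsto k(\bm{x},d)$ for the RBF kernel. Differentiating gives $\nabla_d k(\bm{x},d) = -\tfrac{2}{\sigma^2}(d-\bm{x})\,k(\bm{x},d)$, and the scalar map $t \mapsto t\,e^{-t^2/\sigma^2}$ with $t = \|\bm{x}-d\|$ attains maximum $\sigma/\sqrt{2e}$ at $t = \sigma/\sqrt{2}$. Hence $\|\nabla_d k\| \leq \sqrt{2/e}/\sigma$, and by the mean value theorem one gets the entrywise bound $\bigl|k(\bm{x}_i,(\bm{d}_1)_j) - k(\bm{x}_i,(\bm{d}_2)_j)\bigr| \leq \tfrac{c_0}{\sigma}\,\|(\bm{d}_1)_j-(\bm{d}_2)_j\|$. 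This is the source of the $1/\sigma$ factor in the claim. Substituting into $\|\bm{M}\|_F^2 = \sum_{i,j}\bm{Z}_{ji}^2(\Delta\bm{K})_{ij}^2$ and applying column-wise Cauchy--Schwarz yields the preliminary estimate $\|\bm{M}\|_F^2 \leq \tfrac{c_0^2}{\sigma^2}\sum_{j=1}^r\|\bm{Z}_{j,:}\|_2^2\,\|(\bm{d}_1)_j-(\bm{d}_2)_j\|^2$.

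The main obstacle, and the reason the statement is quantitatively about ``large $n$,'' is converting this into an additional $1/\sqrt{n}$ factor. The plan is to exploit that $\bm{Z}$ arises from the regularized closed-form update $\bm{Z} = (\bm{K}_{\bm{D}\bm{D}} + \beta\bm{I}_r)^{-1}\bm{K}_{\bm{X}\bm{D}}^\top$: each column of $\bm{Z}$ expresses one of $n$ data points in the $r$-dimensional dictionary representation, so rows of $\bm{Z}$ distribute their mass across $n$ coefficients and satisfy $\|\bm{Z}_{j,:}\|_2^2 = O(1/n)$ under the mild normalization that the kernel entries are of order $1$ and the data and dictionary are of comparable scale. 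Inserting this normalization into the preceding display converts the bound into $\|\bm{M}\|_F \leq \tfrac{c}{\sigma\sqrt{n}}\|\bm{D}_1-\bm{D}_2\|_F$, and combining with the $\|\bm{X}\|_2$ factor from the first step closes the argument. I expect the supplementary material to state this row-normalization of $\bm{Z}$ as a short auxiliary lemma or an explicit assumption, since that is the one step that does not follow automatically from the kernel gradient bound.
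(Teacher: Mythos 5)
Your first two steps coincide exactly with the paper's: peel off $\Vert\bm{X}\Vert_2$ via $\Vert\bm{X}\bm{M}\Vert_F\le\Vert\bm{X}\Vert_2\Vert\bm{M}\Vert_F$, then bound the kernel increment entrywise by $\tfrac{c_0}{\sigma}\Vert(\bm{d}_1)_j-(\bm{d}_2)_j\Vert$ using the maximum of $t\mapsto t\,e^{-t^2/\sigma^2}$; the paper derives the same $O(1/\sigma)$ Lipschitz constant from $\vert x\exp(-x^2/(2\sigma^2))\vert\le 0.61\sigma$. The gap is in your final step. You need $\Vert\bm{Z}_{j,:}\Vert_2^2=O(1/n)$ for every \emph{row} $j$ of $\bm{Z}\in\mathbb{R}^{r\times n}$, but that is a factor of $n$ stronger than what the structure you invoke delivers. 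From the closed form $\bm{Z}=(\bm{K}_{\bm{D}\bm{D}}+\beta\bm{I}_r)^{-1}\bm{K}_{\bm{X}\bm{D}}^\top$ the paper deduces (via an SVD argument, assuming $\beta$ large) only $\Vert\bm{Z}\Vert_F^2<r$, i.e.\ average squared entry below $1/n$, and then upgrades this by a Chebyshev/Gaussian-tail plus union-bound argument to the \emph{entrywise} bound $\vert z_{ij}\vert\le c_1/\sqrt{n}$ with high probability. A row of $\bm{Z}$ has $n$ entries of typical size $1/\sqrt{n}$, so $\Vert\bm{Z}_{j,:}\Vert_2^2=O(1)$, not $O(1/n)$; your own heuristic (``mass distributed across $n$ coefficients'') supports precisely this $O(1)$ scaling and so contradicts the normalization you require. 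Substituting $\Vert\bm{Z}_{j,:}\Vert_2^2=O(1)$ into your Cauchy--Schwarz display yields only $\Vert\bm{M}\Vert_F\le\tfrac{c}{\sigma}\Vert\bm{D}_1-\bm{D}_2\Vert_F$, with no $1/\sqrt{n}$, so the proof as proposed does not close.

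The paper instead finishes with the chain $\Vert\bm{Z}^\top\odot\Delta\bm{K}\Vert_F\le(\max_{ij}\vert z_{ij}\vert)\Vert\Delta\bm{K}\Vert_F\le\tfrac{c_1}{\sqrt{n}}\cdot\tfrac{c_3}{\sigma}\Vert\bm{D}_1-\bm{D}_2\Vert_F$, where the second inequality asserts $\Vert\bm{K}_{\scalebox{.6}{$\bm{X}\bm{D}_1$}}-\bm{K}_{\scalebox{.6}{$\bm{X}\bm{D}_2$}}\Vert_F\le\tfrac{c_3}{\sigma}\Vert\bm{D}_1-\bm{D}_2\Vert_F$ with an $n$-independent constant $c_3$. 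If you want to match the stated lemma you should reproduce that chain (entrywise bound on $\bm{Z}$ plus a global Lipschitz bound on $\bm{D}\mapsto\bm{K}_{\scalebox{.6}{$\bm{X}\bm{D}$}}$) rather than the row-norm route. It is worth noting that your more careful per-column bookkeeping actually exposes where the $\sqrt{n}$ is absorbed: summing the entrywise bound $\vert\Delta\bm{K}_{ij}\vert\le\tfrac{c_0}{\sigma}\Vert(\bm{d}_1)_j-(\bm{d}_2)_j\Vert$ over the $n$ rows of the $n\times r$ matrix $\bm{K}_{\scalebox{.6}{$\bm{X}\bm{D}$}}$ gives $\Vert\Delta\bm{K}\Vert_F\le\tfrac{c_0\sqrt{n}}{\sigma}\Vert\bm{D}_1-\bm{D}_2\Vert_F$ in the worst case, so the paper's $1/\sqrt{n}$ hinges on taking $c_3$ dimension-free; your accounting and the paper's agree at the level of $c/\sigma$ and diverge precisely on that point.
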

\noindent Therefore, we can compute an approximate Hessian neglecting $\bm{X}\bm{Q}_1$.
As in (\ref{Eq.offline_D_delta_1}), we define
\begin{equation}\label{Eq.offline_D_delta_2}
\bm{\Delta}_{\scalebox{.6}{$\bm{D}$}}:=\tfrac{1}{\tau}\nabla_{\scalebox{.6}{$\bm{D}$}}\ell(\tfrac{1}{\sigma^2}(2\bm{Q}_2-\bm{\Gamma}_1-2\bm{\Gamma}_2))^{-1}.
\end{equation}

\paragraph{Update $\bm{X}$.} Finally, fixing $\bm{Z}$ and $\bm{D}$, we wish to minimize (\ref{Eq.mc_2})
over $\bm{X}$, which again has no closed-form solution.
Again, we suggest updating $\bm{X}$ using a relaxed Newton method
$\bm{X}\leftarrow \bm{X}-\bm{\Delta}_{\scalebox{.6}{$\bm{X}$}}$.
For the polynomial kernel,
\begin{equation}
\begin{aligned}
\bm{g}_{\scalebox{.6}{$\bm{X}$}}&=\bm{X}(\bm{W}_3\odot\bm{I}_n)-\bm{D}(\bm{W}_4^\top\odot\bm{Z})\\
&=q\bm{X}\odot(\bm{1}_m\bm{w}^\top)-q\bm{D}(\bm{W}_4^\top\odot\bm{Z}),
\end{aligned}
\end{equation}
where $\bm{W}_3=\langle \bm{X}^\top\bm{X}+c\rangle^{q-1}$, $\bm{W}_4=\langle \bm{X}^\top\bm{D}+c\rangle^{q-1}$, $\bm{1}_m\in\mathbb{R}^m$ consists of 1s, and $\bm{w}\in\mathbb{R}^m$ consists of the diagonal entries of $\bm{W}_3$.
As above, we define 
\begin{equation}\label{Eq.offline_X_delta_1}
\bm{\Delta}_{\scalebox{.6}{$\bm{X}$}}:=\tfrac{1}{\tau}\bm{g}_{\scalebox{.6}{$\bm{X}$}}\odot(\bm{1}_m\bm{w}^{-T}).
\end{equation}
When RBF kernel is used, we get
\begin{equation}
\nabla_{\scalebox{.6}{$\bm{X}$}}\ell=\tfrac{1}{\sigma^2}(\bm{D}\bm{Q}_3-\bm{X}\bm{\Gamma}_3)+\tfrac{2}{\sigma^2}(\bm{X}\bm{Q}_4-\bm{X}\bm{\Gamma}_4).
\end{equation}
Here $\bm{Q}_3=-\bm{Z}\odot\bm{K}_{\scalebox{.6}{$\bm{X}\bm{D}$}}^\top$, $\bm{Q}_4=0.5\bm{I}_n\odot\bm{K}_{\scalebox{.6}{$\bm{X}\bm{X}$}}$, $\bm{\Gamma}_3=\mbox{diag}(\bm{1}_r^\top\bm{Q}_3)$, and $\bm{\Gamma}_4=\mbox{diag}(\bm{1}_n^\top\bm{Q}_4)$.
As in (\ref{Eq.offline_D_delta_2}), define
\begin{equation}\label{Eq.offline_X_delta_2}
\bm{\Delta}_{\scalebox{.6}{$\bm{X}$}}:=\tfrac{1}{\tau}\nabla_{\scalebox{.6}{$\bm{X}$}}\ell(\tfrac{1}{\sigma^2}(2\bm{Q}_4-\bm{\Gamma}_3-2\bm{\Gamma}_4))^{-1}.
\end{equation}
Here the computational cost is not high in practice because the
matrix to be inverted is diagonal.

We can also use a momentum update to accelerate the convergence of $\bm{D}$ and $\bm{X}$:
\begin{equation}
\left\{
\begin{aligned}
&\widehat{\bm{\Delta}}_{\scalebox{.6}{$\bm{D}$}}\leftarrow\eta\widehat{\bm{\Delta}}_{\scalebox{.6}{$\bm{D}$}}+\bm{\Delta}_{\scalebox{.6}{$\bm{D}$}},\ \bm{D}\leftarrow\bm{D}-\widehat{\bm{\Delta}}_{\scalebox{.6}{$\bm{D}$}}\\
&\widehat{\bm{\Delta}}_{\scalebox{.6}{$\bm{X}$}}\leftarrow\eta\widehat{\bm{\Delta}}_{\scalebox{.6}{$\bm{X}$}}+\bm{\Delta}_{\scalebox{.6}{$\bm{X}$}},\ \bm{X}\leftarrow\bm{X}-\widehat{\bm{\Delta}}_{\scalebox{.6}{$\bm{X}$}}\\
\end{aligned}
\right.
\end{equation}
where $0<\eta<1$ is a constant.
The optimization method is summarized as Algorithm \ref{alg.OffLine_NLMC}.
The following lemma (with proof in the supplement) shows the method converges.
\begin{lemma}
For sufficiently small $\eta$, Algorithm \ref{alg.OffLine_NLMC} converges to a stationary point.
\end{lemma}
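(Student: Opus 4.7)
The plan is to combine a sufficient-decrease property for each block update with the fact that $\ell$ is bounded below, then extract stationarity from a vanishing-gradient argument along the iterate sequence.

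First I would verify per-block descent in the non-momentum case ($\eta=0$). The $\bm{Z}$-update (\ref{Eq.offline_Z_1}) is the exact minimizer of $\ell(\cdot,\bm{D},\bm{X})$, so it weakly decreases $\ell$ and leaves $\nabla_{\bm{Z}}\ell=0$ satisfied after every $\bm{Z}$-step. Lemma \ref{Lem_offline_D_1} already gives sufficient decrease for the $\bm{D}$-update, and an almost identical argument handles the $\bm{X}$-update: for the polynomial kernel the Schur product theorem certifies that the approximate Hessian $\bm{H}_{\scalebox{.6}{$\bm{X}$}}$ is positive definite, while for the RBF kernel Lemma \ref{Lem_RBF_smallH1} (applied with the roles of $\bm{X}$ and $\bm{D}$ interchanged) justifies dropping the neglected Hessian contribution. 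In both cases the per-step decrease takes the form $-\tfrac{1}{2\tau}\textup{Tr}(\bm{g}\bm{H}^{-1}\bm{g}^{\top})$, which is strictly negative unless $\bm{g}=\bm{0}$.

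Next, to absorb the momentum, I would expand $\widehat{\bm{\Delta}}^{(t)}=\sum_{k=0}^{t}\eta^{k}\bm{\Delta}^{(t-k)}$ and exploit that $\ell$ is $L$-Lipschitz smooth on the bounded region traced by the iterates. Boundedness follows from the constraints $\bm{X}_{ij}=\bm{M}_{ij}$ on $\Omega$ together with the coercive regularizers $\tfrac{\alpha}{2}\textup{Tr}(\bm{K}_{\scalebox{.6}{$\bm{DD}$}})$ and $\tfrac{\beta}{2}\Vert\bm{Z}\Vert_F^{2}$ (the RBF case is automatic since $\phi$ is uniformly bounded). A second-order Taylor expansion then gives
\[
\ell(\cdot-\widehat{\bm{\Delta}}^{(t)})-\ell(\cdot)\leq -\tfrac{1}{2\tau}\textup{Tr}\bigl(\bm{g}^{(t)}\bm{H}^{-1}\bm{g}^{(t)\top}\bigr)+\eta\,C_{1}\Vert\widehat{\bm{\Delta}}^{(t-1)}\Vert_F+\tfrac{L}{2}\Vert\widehat{\bm{\Delta}}^{(t)}\Vert_F^{2}.
\]
The geometric momentum tail can be controlled by the descent history, so choosing $\eta$ below a threshold that depends only on $L$, $\tau$, and the spectral bounds of $\bm{H}$ makes the right-hand side $\leq -c\Vert\bm{g}^{(t)}\Vert_F^{2}$ uniformly in $t$ for some $c>0$.

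Telescoping this inequality against the lower bound $\ell\geq 0$ yields $\sum_{t}\bigl(\Vert\bm{g}_{\scalebox{.6}{$\bm{D}$}}^{(t)}\Vert_F^{2}+\Vert\bm{g}_{\scalebox{.6}{$\bm{X}$}}^{(t)}\Vert_F^{2}\bigr)<\infty$, so both gradients vanish along the sequence; combined with the exact $\bm{Z}$-stationarity after every $\bm{Z}$-step and continuity of $\nabla\ell$, every accumulation point of $\{(\bm{Z}^{(t)},\bm{D}^{(t)},\bm{X}^{(t)})\}$ satisfies the first-order optimality conditions of (\ref{Eq.mc_2}). The main obstacle is the momentum estimate: I must ensure the $\eta$-linear perturbation in the Taylor bound cannot accumulate faster than the guaranteed descent. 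This forces the threshold on $\eta$ to depend on the uniform smoothness constant and on the smallest eigenvalue of the preconditioning Hessians, both of which are finite precisely because the trajectory stays in the compact set pinned down by the observed entries and the Tikhonov penalties.
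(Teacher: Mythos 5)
Your proof follows essentially the same route as the paper's: block-wise sufficient decrease (exact minimization in $\bm{Z}$, the relaxed-Newton bound of Lemma \ref{Lem_offline_D_1} for $\bm{D}$ and its analogue for $\bm{X}$), lower-boundedness of $\ell$, and a second-order Taylor bound showing the momentum perturbation is dominated by the guaranteed decrease once $\eta$ is small enough. Your write-up is somewhat more careful than the paper's --- you telescope to obtain summable gradient norms and appeal to boundedness of the iterates to extract a stationary accumulation point, whereas the paper simply asserts that once descent survives small $\eta$ ``the convergence can be proved'' as in the $\eta=0$ case --- but the underlying argument is the same.
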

\renewcommand{\algorithmicrequire}{\textbf{Input:}}
\renewcommand{\algorithmicensure}{\textbf{Output:}}
\begin{algorithm}[h]
\caption{Offline KFMC}
\label{alg.OffLine_NLMC}
\begin{algorithmic}[1]
\Require
$\bm{M}$, $\Omega$, $r$, $k(\cdot,\cdot)$, $\alpha$, $\beta$, $t_{\text{max}}$, $\eta$
\State Initialize: $t=0$, $\bm{X}$, $\bm{D}\sim\mathcal{N}(0,1)$, $\widehat{\bm{\Delta}}_{\scalebox{.6}{$\bm{D}$}}=\bm{0}$, $\widehat{\bm{\Delta}}_{\scalebox{.6}{$\bm{X}$}}=\bm{0}$
\Repeat
\State $t\leftarrow t+1$
\State $\bm{Z}=(\bm{K}_{\scalebox{.6}{$\bm{D}\bm{D}$}}+\beta \bm{I}_r)^{-1}\bm{K}_{\scalebox{.6}{$\bm{X}\bm{D}$}}^\top$
\State Compute $\bm{\Delta}_{\scalebox{.6}{$\bm{D}$}}$ using (\ref{Eq.offline_D_delta_1}) or (\ref{Eq.offline_D_delta_2})
\State $\widehat{\bm{\Delta}}_{\scalebox{.6}{$\bm{D}$}}=\eta\widehat{\bm{\Delta}}_{\scalebox{.6}{$\bm{D}$}}+\bm{\Delta}_{\scalebox{.6}{$\bm{D}$}}$
\State  $\bm{D}\leftarrow\bm{D}-\widehat{\bm{\Delta}}_{\scalebox{.6}{$\bm{D}$}}$
\State Compute $\bm{\Delta}_{\scalebox{.6}{$\bm{X}$}}$ using (\ref{Eq.offline_X_delta_1}) or (\ref{Eq.offline_X_delta_2})
\State $\widehat{\bm{\Delta}}_{\scalebox{.6}{$\bm{X}$}}=\eta\widehat{\bm{\Delta}}_{\scalebox{.6}{$\bm{X}$}}+\bm{\Delta}_{\scalebox{.6}{$\bm{X}$}}$
\State $\bm{X}\leftarrow\bm{X}-\widehat{\bm{\Delta}}_{\scalebox{.6}{$\bm{X}$}}$ and $\bm{X}_{ij}=\bm{M}_{ij}\ \forall (i,j)\in\Omega$
\Until{converged or $t=t_{\text{max}}$}
\Ensure $\bm{X}$, $\bm{D}$
\end{algorithmic}
\end{algorithm}

\subsection{Online KFMC}\label{sec.s1}
Suppose we get an incomplete sample $\bm{x}$ at time $t$ and need to update the model of matrix completion timely or solve the optimization online. In (\ref{Eq.mc_1}), we can put the constraint into the objective function directly and get the following equivalent problem
\begin{equation}\label{Eq.mc_online_1}
\mathop{\text{minimize}}_{[\bm{X}]_{\bar{\Omega}},\bm{D},\bm{Z}}\ \sum_{j=1}^n\tfrac{1}{2}\Vert\phi(\bm{x}_j)-\phi(\bm{D})\bm{z}_j\Vert^2+\tfrac{\alpha}{2n}\Vert\phi(\bm{D})\Vert_F^2+\tfrac{\beta}{2}\Vert\bm{z}_j\Vert^2,
\end{equation}
where $[\bm{X}]_{\bar{\Omega}}$ denotes the unknown entries of $\bm{X}$.
Denote
\begin{equation}\label{Eq.mc_online_emperical}
\begin{aligned}
\ell([\bm{x}_j]_{\omega_j},\bm{D}):=\min\limits_{\bm{z}_j,[\bm{x}_j]_{\bar{\omega}_j}}&\tfrac{1}{2}\Vert\phi(\bm{x}_j)-\phi(\bm{D})\bm{z}_j\Vert^2\\
&+\tfrac{\alpha}{2n}\Vert\phi(\bm{D})\Vert_F^2+\tfrac{\beta}{2}\Vert\bm{z}_j\Vert^2,
\end{aligned}
\end{equation}
where $[\bm{x}_j]_{\omega_j}$ ($[\bm{x}_j]_{\bar{\omega}_j}$) denotes the observed (unknown) entries of $\bm{x}_j$ and $\omega_j$ ($\bar{\omega}_j$) denotes the corresponding locations. Then (\ref{Eq.mc_online_1}) minimizes the empirical cost function
\begin{equation}\label{Eq.mc_online_emperical_2}
g_n(\bm{D}):=\dfrac{1}{n}\sum_{j=1}^n\ell([\bm{x}_j]_{\omega_j},\bm{D}).
\end{equation}
The expected cost is
\begin{equation}\label{Eq.mc_online_3}
g(\bm{D}):=\mathbb{E}_{[\bm{x}]_{\omega}}\left[\ell([\bm{x}]_{\omega},\bm{D})\right]=\mathop\text{lim}_{n\rightarrow \infty}g_n(\bm{D}).
\end{equation}
To approximately minimize (\ref{Eq.mc_online_3}) online, we propose the following optimization for a given incomplete sample $\bm{x}$
\begin{equation}
\begin{aligned}
\mathop{\text{minimize}}_{[\bm{x}]_{\bar{\omega}},\bm{D},\bm{z}}\ &\hat{\ell}(\bm{z},[\bm{x}]_{\bar{\omega}},\bm{D}):= \tfrac{1}{2}\Vert\phi(\bm{x})-\phi(\bm{D})\bm{z}\Vert^2\\
&+\tfrac{\alpha}{2}\Vert\phi(\bm{D})\Vert_F^2+\tfrac{\beta}{2}\Vert\bm{z}\Vert^2.
\end{aligned}
\end{equation}
With randomly initialized $\bm{D}$, we first compute $\bm{z}$ and $[\bm{x}]_{\bar{\omega}}$ via alternately minimizing $\hat{\ell}(\bm{z},[\bm{x}]_{\bar{\omega}},\bm{D})$, which is equivalent to
\begin{equation}\label{Eq.mc_online_4}
\mathop{\text{minimize}}_{[\bm{x}]_{\bar{\omega}},\bm{z}}\ \tfrac{1}{2}k_{\scalebox{.6}{$\bm{x}\bm{x}$}}-\bm{k}_{\scalebox{.6}{$\bm{x}\bm{D}$}}\bm{z}+\tfrac{1}{2}\bm{z}^\top\bm{K}_{\scalebox{.6}{$\bm{D}\bm{D}$}}\bm{z}+\tfrac{\beta}{2}\Vert\bm{z}\Vert^2.
\end{equation}
Specifically, in each iteration, $\bm{z}$ is updated as
\begin{equation}\label{Eq.ol_zz}
\bm{z}=(\bm{K}_{\scalebox{.6}{$\bm{D}\bm{D}$}}+\beta\bm{I}_r)^{-1}\bm{k}_{\scalebox{.6}{$\bm{x}\bm{D}$}}^\top.
\end{equation}
We propose to update $[\bm{x}]_{\bar{\omega}}$ by Newton's method, i.e., $[\bm{x}]_{\bar{\omega}}\leftarrow[\bm{x}]_{\bar{\omega}}-[\bm{\Delta}_{\scalebox{.6}{$\bm{x}$}}]_{\bar{\omega}}$. When polynomial kernel is used, we obtain
\begin{equation}
\nabla_{\scalebox{.6}{$\bm{x}$}}\hat{\ell}=w_1\bm{x}-\bm{D}(\bm{w}_2^\top\odot\bm{z})
\end{equation}
where $w_1=\langle \bm{x}^\top\bm{x}+c\rangle^{q-1}$, $\bm{w}_2=\langle \bm{x}^\top\bm{D}+c\rangle^{q-1}$. Then
\begin{equation}\label{Eq.online_x_delta_1}
\bm{\Delta}_{\scalebox{.6}{$\bm{x}$}}=\tfrac{1}{\tau w_1}\nabla_{\scalebox{.6}{$\bm{x}$}}\hat{\ell}.
\end{equation}
When RBF kernel is used, we have
\begin{equation}
\nabla_{\scalebox{.6}{$\bm{x}$}}\hat{\ell}=\tfrac{1}{\sigma^2}(\bm{D}\bm{q}-\gamma\bm{x}),
\end{equation}
where $\bm{q}=-\bm{z}\odot\bm{k}_{\scalebox{.6}{$\bm{x}\bm{D}$}}^\top$ and $\gamma=\bm{1}_r^\top\bm{q}$. Then
\begin{equation}\label{Eq.online_x_delta_2}
\bm{\Delta}_{\scalebox{.6}{$\bm{x}$}}=\tfrac{\sigma^2}{\tau\gamma}\nabla_{\scalebox{.6}{$\bm{x}$}}\hat{\ell}.
\end{equation}
The derivations of (\ref{Eq.online_x_delta_1}) and (\ref{Eq.online_x_delta_2}) are similar to those of (\ref{Eq.offline_D_delta_1}) and (\ref{Eq.offline_D_delta_2}). Then we repeat (\ref{Eq.ol_zz})$-$(\ref{Eq.online_x_delta_2}) until converged.

After $\bm{z}$ and $[\bm{x}]_{\bar{\omega}}$ are computed, we compute $\bm{D}$ via minimizing $\hat{\ell}(\bm{z},[\bm{x}]_{\bar{\omega}},\bm{D})$, which is equivalent to
\begin{equation}
\mathop{\text{minimize}}_{\scalebox{.6}{$\quad \bm{D}$}}-\bm{k}_{\scalebox{.6}{$\bm{x}\bm{D}$}}\bm{z}+\tfrac{1}{2}\bm{z}^\top\bm{K}_{\scalebox{.6}{$\bm{D}\bm{D}$}}\bm{z}+\tfrac{\alpha}{2}\text{Tr}(\bm{K}_{\scalebox{.6}{$\bm{D}\bm{D}$}}).
\end{equation}
We propose to use SGD to update $\bm{D}$, i.e., $\bm{D}\leftarrow\bm{D}-\bm{\Delta}_{\scalebox{.6}{$\bm{D}$}}$. When polynomial kernel is used, we have
\begin{equation}
\nabla_{\scalebox{.6}{$\bm{D}$}}\hat{\ell}=-\bm{x}(\bm{w}_1\odot\bm{z}^\top)+\bm{D}(\bm{z}\bm{z}^\top\odot\bm{W}_2)+\alpha\bm{D}(\bm{W}_2\odot\bm{I}_r)),
\end{equation}
where $\bm{w}_1=\langle \bm{x}^\top\bm{D}+c\rangle^{q-1}$ and $\bm{W}_2=\langle\bm{D}^\top\bm{D}+c\rangle^{q-1}$.
Then we have
\begin{equation}\label{Eq.online_D_delta_1}
\bm{\Delta}_{\scalebox{.6}{$\bm{D}$}}=\tfrac{1}{\tau}\nabla_{\scalebox{.6}{$\bm{D}$}}\hat{\ell}/\Vert \bm{z}\bm{z}^\top\odot\bm{W}_2+\alpha\bm{W}_2\odot\bm{I}_r\Vert_2.
\end{equation}
Here we cannot use the method of (\ref{Eq.offline_D_delta_1}) because $\bm{z}\bm{z}^\top$ is not as stable as $\bm{Z}\bm{Z}^\top$. In addition, the following lemma (proved in the supplementary material) ensures the effectiveness of updating $\bm{D}$:
\begin{lemma}\label{Lem_online_D_1}
Updating $\bm{D}$ as $\bm{D}-\bm{\Delta}_{\scalebox{.6}{$\bm{D}$}}$ does not diverge and $\hat{\ell}(\bm{z},[\bm{x}]_{\bar{\omega}},\bm{D}-\bm{\Delta}_{\scalebox{.6}{$\bm{D}$}})-\hat{\ell}(\bm{z},[\bm{x}]_{\bar{\omega}},\bm{D})\leq -\tfrac{1}{2\tau\tau_0}\Vert\nabla_{\scalebox{.6}{$\bm{D}$}}\hat{\ell}\Vert_F^2$ provided that $\tau>1$, where $\tau_0=\Vert \bm{z}\bm{z}^\top\odot\bm{W}_2+\alpha\bm{W}_2\odot\bm{I}_r\Vert_2$.
\end{lemma}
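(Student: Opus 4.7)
The plan is to transport the argument of Lemma~\ref{Lem_offline_D_1} to the streaming setting. First I would fix the reweighting factors $\bm{w}_1=\langle\bm{x}^\top\bm{D}+c\rangle^{q-1}$ and $\bm{W}_2=\langle\bm{D}^\top\bm{D}+c\rangle^{q-1}$ at the current iterate so that, viewed as a function of $\bm{D}$ alone, $\hat{\ell}$ is replaced by a quadratic surrogate whose gradient at the current $\bm{D}$ coincides with $\nabla_{\scalebox{.6}{$\bm{D}$}}\hat{\ell}$ and whose per-row Hessian is exactly $\bm{H}_{\scalebox{.6}{$\bm{D}$}}:=\bm{z}\bm{z}^\top\odot\bm{W}_2+\alpha\bm{W}_2\odot\bm{I}_r$. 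Both $\bm{z}\bm{z}^\top$ and $\bm{W}_2$ are positive semidefinite (the latter is a polynomial-kernel Gram matrix of the rows of $\bm{D}$), so by the Schur product theorem $\bm{H}_{\scalebox{.6}{$\bm{D}$}}\succeq\alpha\bm{W}_2\odot\bm{I}_r\succ 0$.

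In the offline case one preconditions by $\bm{H}_{\scalebox{.6}{$\bm{D}$}}^{-1}$; here $\bm{z}\bm{z}^\top$ is rank one and this inverse is unstable, so (\ref{Eq.online_D_delta_1}) replaces it with the scalar surrogate $\tau_0^{-1}\bm{I}_r$ where $\tau_0=\|\bm{H}_{\scalebox{.6}{$\bm{D}$}}\|_2$, equivalent to using $\tau_0^{-1}\bm{I}_r\preceq\bm{H}_{\scalebox{.6}{$\bm{D}$}}^{-1}$. Since $\tau_0$ is precisely the Lipschitz constant of the gradient of the quadratic surrogate, applying the descent lemma at the step $\bm{\Delta}_{\scalebox{.6}{$\bm{D}$}}=\tfrac{1}{\tau\tau_0}\nabla_{\scalebox{.6}{$\bm{D}$}}\hat{\ell}$ gives
\begin{equation*}
\hat{\ell}(\bm{D}-\bm{\Delta}_{\scalebox{.6}{$\bm{D}$}})-\hat{\ell}(\bm{D})\le -\tfrac{1}{\tau\tau_0}\|\nabla_{\scalebox{.6}{$\bm{D}$}}\hat{\ell}\|_F^2+\tfrac{\tau_0}{2}\cdot\tfrac{1}{\tau^2\tau_0^2}\|\nabla_{\scalebox{.6}{$\bm{D}$}}\hat{\ell}\|_F^2,
\end{equation*}
which rearranges to $-\tfrac{1}{\tau\tau_0}\bigl(1-\tfrac{1}{2\tau}\bigr)\|\nabla_{\scalebox{.6}{$\bm{D}$}}\hat{\ell}\|_F^2$; for any $\tau>1$ the parenthetical factor exceeds $\tfrac{1}{2}$, delivering the claimed bound $-\tfrac{1}{2\tau\tau_0}\|\nabla_{\scalebox{.6}{$\bm{D}$}}\hat{\ell}\|_F^2$.

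The main obstacle is justifying that the quadratic surrogate actually upper bounds $\hat{\ell}$ along the update direction, not merely matches it to first order. The surrogate is exact only when $q=2$; for $q>2$ higher-order Taylor remainders in the polynomial kernel appear, and I would reuse the Taylor-remainder computation from the supplementary proof of Lemma~\ref{Lem_offline_D_1} to show that these remainders are dominated by the $\tfrac{\tau_0}{2}\|\bm{\Delta}_{\scalebox{.6}{$\bm{D}$}}\|_F^2$ term whenever the step size $\tfrac{1}{\tau\tau_0}$ is small enough, which is exactly what the slack factor $\tau>1$ buys us. Non-divergence is then immediate: $\hat{\ell}\ge 0$ since each term is a squared norm (in feature space for the first two, and $\tfrac{\beta}{2}\|\bm{z}\|^2\ge 0$ for the third), and a sequence of guaranteed strict decreases in a function bounded below cannot escape to infinity; indeed $\tfrac{\alpha}{2}\|\phi(\bm{D})\|_F^2$ is coercive in $\bm{D}$ for the polynomial kernel, so the iterates stay in a bounded sublevel set.
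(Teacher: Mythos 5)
Your proposal is correct and follows essentially the same route as the paper's proof: fix the reweighting matrices so the objective in $\bm{D}$ becomes a quadratic surrogate, take $\tau_0$ as the Lipschitz constant of its gradient, and apply the descent lemma (equivalently, minimize the majorizer $\hat{\ell}(\bm{D}_0)+\langle\nabla_{\scalebox{.6}{$\bm{D}$}}\hat{\ell},\bm{D}-\bm{D}_0\rangle+\tfrac{\tau\tau_0}{2}\Vert\bm{D}-\bm{D}_0\Vert_F^2$) at the step $\tfrac{1}{\tau\tau_0}\nabla_{\scalebox{.6}{$\bm{D}$}}\hat{\ell}$ to obtain the stated decrease. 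Your added remarks on the higher-order remainder for $q>2$ and on coercivity for non-divergence are more careful than the paper, which simply treats the fixed-weight surrogate as exact, but the core argument is the same.
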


When RBF kernel is used, the derivative is
\begin{equation}
\nabla_{\scalebox{.6}{$\bm{D}$}}\hat{\ell}=\tfrac{1}{\sigma^2}(\bm{x}\bm{Q}_1-\bm{D}\bm{\Gamma}_1)+\tfrac{2}{\sigma^2}(\bm{D}\bm{Q}_2-\bm{D}\bm{\Gamma}_2),\\
\end{equation}
where $\bm{Q}_1=-\bm{z}^\top\odot\bm{k}_{\scalebox{.6}{$\bm{X}\bm{D}$}}$, $\bm{Q}_2=(0.5\bm{z}\bm{z}^\top+0.5\alpha\bm{I}_r)\odot\bm{K}_{\scalebox{.6}{$\bm{D}\bm{D}$}}$, $\bm{\Gamma}_1=\mbox{diag}(\bm{Q}_1)$, and $\bm{\Gamma}_2=\mbox{diag}(\bm{1}_r^\top\bm{Q}_2)$. Similar to (\ref{Eq.online_D_delta_1}) and Lemma \ref{Lem_online_D_1}, we obtain
\begin{equation}\label{Eq.online_D_delta_2}
\bm{\Delta}_{\scalebox{.6}{$\bm{D}$}}=\tfrac{1}{\tau}\nabla_{\scalebox{.6}{$\bm{D}$}}\hat{\ell}/\Vert \tfrac{1}{\sigma^2}(2\bm{Q}_2-\bm{\Gamma}_1-2\bm{\Gamma}_2)\Vert_2.
\end{equation}

Similar to offline KFMC, we also use momentum to accelerate the optimization of online KFMC. The optimization steps are summarized in Algorithm \ref{alg.OnLine_NLMC}. The error of online matrix completion can be reduced with multi-pass optimization or increasing the number of samples. In Algorithm \ref{alg.OnLine_NLMC}, the sequence $\ell([\bm{x}_t]_{\omega_t},\bm{D})$ defined in (\ref{Eq.mc_online_emperical}) may not decrease continuously because the incomplete sample $\bm{x}_t$ can introduce high uncertainty. However, the sequence $g_t(\bm{D})$, the empirical cost function defined in (\ref{Eq.mc_online_emperical_2}), is convergent because for $j=1,\cdots,t$, $\ell([\bm{x}_j]_{\omega_j},\bm{D})$ is minimized through optimizing $[\bm{x}_j]_{\bar{\omega}_j}$, $\bm{z}_j$, and $\bm{D}$.

\renewcommand{\algorithmicrequire}{\textbf{Input:}}
\renewcommand{\algorithmicensure}{\textbf{Output:}}
\begin{algorithm}[h]
\caption{Online KFMC}
\label{alg.OnLine_NLMC}
\begin{algorithmic}[1]
\Require
Incomplete samples $\lbrace [\bm{x}_1]_{\omega_1},[\bm{x}_2]_{\omega_2},\cdots,[\bm{x}_t]_{\omega_t}\rbrace$, $r$, $k(\cdot,\cdot)$, $\alpha$, $\beta$, $n_{\text{iter}}$, $\eta$, $n_{\text{pass}}$
\State Initialize: $\bm{D}\sim\mathcal{N}(0,1)$, $\widehat{\bm{\Delta}}_{\scalebox{.6}{$\bm{D}$}}=\bm{0}$
\For{$u=1$ to $n_{\text{pass}}$}
\For{$j=1$ to $t$}
\State $l=0$, $\widehat{\bm{\Delta}}_{\scalebox{.6}{$\bm{X}$}}=\bm{0}$, $\bm{C}=(\bm{K}_{\scalebox{.6}{$\bm{D}\bm{D}$}}+\beta\bm{I}_r)^{-1}$
\Repeat
\State $l\leftarrow l+1$ and $\bm{z}_j=\bm{C}\bm{k}_{\scalebox{.6}{$\bm{X}\bm{D}$}}^\top$
\State Compute $\bm{\Delta}_{\scalebox{.6}{$\bm{x}$}}$ using (\ref{Eq.online_x_delta_1}) or (\ref{Eq.online_x_delta_2})
\State $\widehat{\bm{\Delta}}_{\scalebox{.6}{$\bm{x}$}}\leftarrow\eta\widehat{\bm{\Delta}}_{\scalebox{.6}{$\bm{x}$}}+\bm{\Delta}_{\scalebox{.6}{$\bm{x}$}}$
\State $[\bm{x}_j]_{\bar{\omega}_j}\leftarrow[\bm{x}_j]_{\bar{\omega}_j}-[\widehat{\bm{\Delta}}_{\scalebox{.6}{$\bm{x}$}}]_{\bar{\omega}_j}$
\Until {converged or $l=n_{\text{iter}}$}
\State Compute $\bm{\Delta}_{\scalebox{.6}{$\bm{D}$}}$ using (\ref{Eq.online_D_delta_1}) or (\ref{Eq.online_D_delta_2})
\State $\widehat{\bm{\Delta}}_{\scalebox{.6}{$\bm{D}$}}\leftarrow\eta\widehat{\bm{\Delta}}_{\scalebox{.6}{$\bm{D}$}}+\bm{\Delta}_{\scalebox{.6}{$\bm{D}$}}$ and $\bm{D}\leftarrow\bm{D}-\widehat{\bm{\Delta}}_{\scalebox{.6}{$\bm{D}$}}$
\EndFor
\EndFor
\Ensure $\bm{X}_t=[\bm{x}_1,\bm{x}_2,\cdots,\bm{x}_t]$, $\bm{D}$
\end{algorithmic}
\end{algorithm}

\subsection{Out-of-sample extension of KFMC}\label{sec.s1}
The matrix $\bm{D}$ obtained from offline matrix completion (\ref{alg.OffLine_NLMC}) or online matrix completion (\ref{alg.OnLine_NLMC}) can be used to recover the missing entries of new data
without updating the model.
We can also compute $\bm{D}$ from complete training data:
the corresponding algorithm is similar to Algorithms \ref{alg.OffLine_NLMC} and \ref{alg.OnLine_NLMC},
but does not require the $\bm{X}$ update.
We can complete a new (incomplete) sample $\bm{x}_{\text{new}}$ by solving
\begin{equation}\label{Eq.mc_ose_xnew}
\mathop{\text{minimize}}_{[\bm{x}_{\text{new}}]_{\bar{\omega}_{\text{new}}},\bm{z}_{\text{new}}}\ \tfrac{1}{2}\Vert\phi(\bm{x}_{\text{new}})-\phi(\bm{D})\bm{z}_{\text{new}}\Vert^2+\tfrac{\beta}{2}\Vert\bm{z}_{\text{new}}\Vert^2,\\
\end{equation}
where $[\bm{x}_{\text{new}}]_{\bar{\omega}_{\text{new}}}$ denotes unknown entries of $\bm{x}_{\text{new}}$.
This out-of-sample extension of KFMC is displayed as Algorithm \ref{alg.Outofsample}.

\renewcommand{\algorithmicrequire}{\textbf{Input:}}
\renewcommand{\algorithmicensure}{\textbf{Output:}}
\begin{algorithm}[h]
\caption{Out-of-sample extension for KFMC}
\label{alg.Outofsample}
\begin{algorithmic}[1]
\Require
$\bm{D}$ (computed from training data), $k(\cdot,\cdot)$, $\beta$, $n_{\text{iter}}$, $\eta$, new incomplete samples $\lbrace [\bm{x}_1]_{\omega_1},[\bm{x}_2]_{\omega_2},\cdots,[\bm{x}_t]_{\omega_t}\rbrace$
\State $\bm{C}=(\bm{K}_{\scalebox{.6}{$\bm{D}\bm{D}$}}+\beta\bm{I}_r)^{-1}$
\For{$j=1$ to $t$}
\State $l=0$, $\widehat{\bm{\Delta}}_{\scalebox{.6}{$\bm{x}$}}=\bm{0}$
\Repeat
\State $l\leftarrow l+1$ and $\bm{z}_j=\bm{C}\bm{k}_{\scalebox{.6}{$\bm{x}\bm{D}$}}^\top$
\State Compute $\bm{\Delta}_{\scalebox{.6}{$\bm{x}$}}$ using (\ref{Eq.online_x_delta_1}) or (\ref{Eq.online_x_delta_2})
\State $\widehat{\bm{\Delta}}_{\scalebox{.6}{$\bm{x}$}}\leftarrow\eta\widehat{\bm{\Delta}}_{\scalebox{.6}{$\bm{x}$}}+\bm{\Delta}_{\scalebox{.6}{$\bm{x}$}}$
\State $[\bm{x}_j]_{\bar{\omega}_j}\leftarrow[\bm{x}_j]_{\bar{\omega}_j}-[\widehat{\bm{\Delta}}_{\scalebox{.6}{$\bm{x}$}}]_{\bar{\omega}_j}$
\Until {converged or $l=n_{\text{iter}}$}
\EndFor
\Ensure $\bm{X}_{\text{new}}=[\bm{x}_1,\bm{x}_2,\cdots,\bm{x}_t]$
\end{algorithmic}
\end{algorithm}

\subsection{Complexity analysis}
Consider a high (even, full) rank matrix $\bm{X}\in\mathbb{R}^{m\times n}$ ($m\ll n$) given by (\ref{Eq.nl_0}).
In the methods VMC and NLMC, and our KFMC, the largest object stored is the kernel matrix $\bm{K}\in\mathbb{R}^{n\times n}$.
Hence their space complexities are all $O(n^2)$.
In VMC and NLMC, the major computational step is to compute $\bm{K}$ and its singular value decomposition at every iteration.
Hence their time complexities are $O(mn^2+n^3)$.
In our KFMC, the major computational steps are to form $\bm{K}$,
to invert the $r\times r$ matrix in (\ref{Eq.offline_Z_1}),
and to multiply an $m\times n$ and $n\times r$ matrix to compute the derivatives.
Hence the time complexity is $O(mn^2+r^3+rmn) = O(mn^2+rmn)$, since $n\gg r$.

Online KFMC does not store the kernel matrix $\bm{K}$.
Instead, the largest objects stored are $\bm{D}$ and $\bm{K}_{\scalebox{.6}{$\bm{D}\bm{D}$}}$.
Hence the space complexity is $O(mr+r^2)$.
The major computational step is to invert an $r\times r$ matrix (see Algorithm \ref{alg.OnLine_NLMC}).
Thus the time complexity is $O(r^3)$.
In the out-of-sample extension, the largest objects stored are $\bm{D}$ and $\bm{C}$
(see Algorithm \ref{alg.Outofsample}), so the space complexity is $O(mr+r^2)$.
For each online sample, we only need to multiply $m\times r$ matrices with vectors.
Hence the time complexity is just $O(mr)$.

This analysis are summarized in Table \ref{Tab.cc}.
We see that the space and time complexities of the proposed three approaches are much lower than those of VMC and NLMC.

\begin{table}
\begin{tabular}{lll}
\hline
	& Space complexity & Time complexity  \\
\hline
VMC & $O(n^2)$	&  $O(n^3+mn^2)$  \\
NLMC & $O(n^2)$	&  $O(n^3+mn^2)$  \\
KFMC &  $O(n^2)$ & $O(mn^2+rmn)$ \\
OL-KFMC & $O(mr+r^2)$ &	 $O(r^3)$	\\
OSE-KFMC & $O(mr+r^2)$	& $O(mr)$ \\
\hline
\end{tabular}
\caption{Time and space complexities {\footnotesize ($\bm{X}\in\mathbb{R}^{m\times n}$, $m\ll n$)}}\label{Tab.cc}
\end{table}

\subsection{Generalization for union of subspaces}
KFMC can also handle data drawn from a union of subspaces.
Suppose the columns of $\bm{X}\in\mathbb{R}^{m\times n}$ are given by
\begin{equation}\label{Eq.nl_uof_0}
\lbrace\bm{x}^{\lbrace k\rbrace}=\textbf{f}^{\lbrace k\rbrace}(\bm{s}^{\lbrace k\rbrace})\rbrace_{k=1}^u,
\end{equation}
where $\bm{s}^{\lbrace k\rbrace}\in\mathbb{R}^{d}$ ($d\ll m<n$) are random variables
and $\textbf{f}^{\lbrace k\rbrace}: \mathbb{R}^{d}\rightarrow \mathbb{R}^{m}$
are $p$-order polynomial functions for each $k=1,\ldots,u$. For convenience, we write
\begin{equation}
\bm{X}=[\bm{X}^{\lbrace 1\rbrace},\bm{X}^{\lbrace 2\rbrace},\cdots,\bm{X}^{\lbrace u\rbrace}],
\end{equation}
where the columns of each $\bm{X}^{\lbrace k\rbrace}$
are in the range of $\textbf{f}^{\lbrace k\rbrace}$,
though we do not know which subspace each column of $\bm{X}$ is drawn from.
An argument similar to Lemma \ref{theorem_1} shows
\begin{equation}
\text{rank}(\bm{X})=\min\lbrace m,n,u\tbinom{d+p}{p}\rbrace
\end{equation}
with probability 1, so $\bm{X}$ is very likely to be of high rank or full rank when $u$ or $p$ is large.

We can generalize Theorem \ref{theorem_2} to show
$\text{rank}(\phi(\bm{X}))=\min\lbrace\bar{m},n,r\rbrace$ with probability 1,
where $\bar{m}=\binom{m+q}{q}$ and $r=u\binom{d+pq}{pq}$.
Hence when $d$ is small and $n$ is large, $\phi(\bm{X})$ is low rank,
so missing entries of $\bm{X}$ can still be recovered by the offline and online methods proposed in this paper.
In particular, for data drawn from a union of linear subspaces ($p=1$ and $u>1$),
generically $\text{rank}(\bm{X})=\min(m,n,ud)$ while $\text{rank}(\phi(\bm{X}))=u\binom{d+q}{q}$.

\subsection{On the sampling rate}
Suppose $\bm{X}$ is generated by (\ref{Eq.nl_uof_0}), and a proportion
$\rho_{\scalebox{.5}{KFMC}}$ of its entries are observed.
We provide some heuristics to help decide how many entries should be
observed for completion with the polynomial and RBF kernels.
Detailed calculations for (\ref{Eq.bound_poly}) and (\ref{Eq.poly2rbf}) are deferred to the supplement.

To complete $\phi(\bm{X})$ uniquely using a $q$-order polynomial kernel,
one rule of thumb is that the number of entries observed should be at least as
large as the number of degrees of freedom in the matrix $\phi(\bm{X})$
\cite{pmlr-v70-ongie17a}.
Here, $\phi(\bm{X})$ is a $\bar{m}\times n$ matrix with rank $r=u\binom{d+pq}{pq}$,
where $\bar{m}=\binom{m+q}{q}$.
We count the degrees of freedom of matrices with this rank to
argue sampling rate should satisfy
\begin{equation}\label{Eq.bound_poly}
\rho_{\scalebox{.5}{KFMC}} \geq \left(r/n+r/\bar{m}-r^2/n/\bar{m}\right)^{1/q}.
\end{equation}

Equation (\ref{Eq.bound_poly}) bounds the number of degrees of freedom
of $\phi(\bm{X})$ by considering its rank and size. But of course $\phi(\bm{X})$
is a deterministic function of $\bm{X}$, which has many fewer degrees of freedom.
Hence while (\ref{Eq.bound_poly}) provides a good rule of thumb,
we can still hope that lower sampling rates might produce sensible results.

For the RBF kernel, $q=\infty$, so the condition (\ref{Eq.bound_poly}) is vacuous.
However, the RBF kernel can be well approximated by a polynomial kernel and we have
\begin{equation}\label{Eq.poly2rbf}
\phi_i(\bm{x})=\hat{\phi}_i(\bm{x})+O(\sqrt{\tfrac{c^{q+1}}{(q+1)!}}),
\end{equation}
where $\hat{\phi}_i(\bm{x})$ denotes the $i$-th feature of $q$-order polynomial kernel and $\phi_i(\bm{x})$ denotes the $i$-th feature of the RBF kernel.
Hence exact recovery of $\hat{\phi}_i(\bm{x})$ implies
approximate recovery of $\phi_i(\bm{x})$ with error $O(\sqrt{\tfrac{c^{q+1}}{(q+1)!}})$.
This argument provides the intuition that the RBF kernel should recover the low-order ($\leq q$) features
of $\phi(\bm{x})$ with error $O(\sqrt{\tfrac{c^{q+1}}{(q+1)!}})$ provided that (\ref{Eq.bound_poly}) holds.
Of course, we can identify missing entries of $\bm{X}$ by considering the
first block of the completed matrix $\phi(\bm{X})$.

In experiments, we observe that the RBF kernel often works better than polynomial kernel.
We hypothesize two reasons for the effectiveness of the RBF kernel:
1) It captures the higher-order features in $\phi(\bm{x})$, which could be useful when $n$ is very large
2) It is easier to analyze and to optimize, speeding convergence.

Low rank matrix completion methods can only uniquely complete a matrix given
a sampling rate that satisfies
\begin{equation}\label{rho_LRMC}
\rho_{\scalebox{.5}{LRMC}}>\big((m+n)r_{\scalebox{.5}{$\bm{X}$}}-r_{\scalebox{.5}{$\bm{X}$}}^2\big)/(mn),
\end{equation}
where $r_{\scalebox{.5}{$\bm{X}$}}=\min\lbrace m,n,u\tbinom{d+p}{p}\rbrace$.
This bound can be vacuous (larger than 1) if $u$ or $p$ are large.
In contrast, $\rho_{\scalebox{.5}{KFMC}}$ given by (\ref{Eq.bound_poly}) can still
be smaller than 1 in the same regime, provided that $n$ is large enough.
For example, when $m=20$, $d=2$, $p=2$, and $u=3$, we have $\rho_{\scalebox{.5}{LRMC}}>0.91$.
Let $q=2$ and $n=300$, we have $\rho_{\scalebox{.5}{KFMC}}>0.56$.
If $p=1$ and $u=10$, we have $\rho_{\scalebox{.5}{LRMC}}>1$ and $\rho_{\scalebox{.5}{KFMC}}>0.64$.
This calculation provides further intuition for how our methods can recover high rank matrices
while classical low rank matrix completion methods fail.

\subsection{Analytic functions and smooth functions}
Hitherto, we have assumed that $\textbf{f}$ is a finite order polynomial function.
However, our methos also work when $\textbf{f}$ is an analytic or smooth function.
Analytic functions are well approximated by polynomials.
Furthermore, smooth functions can be well approximated by polynomial functions at least on intervals.
Hence for a smooth function $\textbf{h}: \mathbb{R}^{d}\rightarrow \mathbb{R}^m$,
we consider the generative model
\begin{equation}\label{Eq.nl_smooth}
\bm{x}=\textbf{h}(\bm{s})=\textbf{f}(\bm{s})+\bm{e}
\end{equation}
where $\textbf{f}$ is a $p$-order Taylor expansion of $\textbf{h}$
and $\bm{e}\in\mathbb{R}^m$ denotes the residual,
which scales as $\bm{e}\sim O(\tfrac{c}{(p+1)!})$ where $c$ is the
$p+1$th derivative of $\textbf{h}$.

We see that the error $\bm{e}$ from our polynomial model decreases as $p$ increases.
To fit a model with larger $p$, the bound (\ref{Eq.bound_poly}) suggests we need more samples $n$.
We conjecture that for any smooth $\textbf{h}$,
it is possible to recover the missing entries
with arbitrarily low error provided $n$ is sufficiently large.

\section{Experiments}
\subsection{Synthetic data}\label{sec.syn}
We generate the columns of $\bm{X}$ by $\bm{x}=\textbf{f}(\bm{s})$ where $\bm{s}\sim\mathcal{U}(0,1)$ and $\textbf{f}:\mathbb{R}^3\rightarrow\mathbb{R}^{30}$ is a $p$-order polynomial mapping. The model can be reformulated as $\bm{x}=\bm{P}\bm{z}$, where $\bm{P}\in\mathbb{R}^{30\times{(\binom{3+p}{p}-1)}}$, $\bm{P}\sim\mathcal{N}(0,1)$, and $\bm{z}$ consists of the polynomial features of $\bm{s}$. Consider the following cases:
\vspace{-3mm}
\begin{itemize}
\setlength{\itemsep}{0pt}
\setlength{\parsep}{0pt}
\setlength{\parskip}{0pt}
\item \textit{Single nonlinear subspace} Let $p=3$, generate one $\bm{P}$ and 100 $\bm{s}$. Then the rank of $\bm{X}\in\mathbb{R}^{30\times 100}$ is 19.
\item \textit{Union of nonlinear subspaces} Let $p=3$, generate three $\bm{P}$ and for each $\bm{P}$ generate 100 $\bm{s}$. Then the rank of $\bm{X}\in\mathbb{R}^{30\times 300}$ is 30.
\item \textit{Union of linear subspaces} Let $p=1$, generate ten $\bm{P}$ and for each $\bm{P}$ generate 100 $\bm{s}$. Then the rank of $\bm{X}\in\mathbb{R}^{30\times 1000}$ is 30.
\end{itemize}
\vspace{-1mm}
We randomly remove some portions of the entries of the matrices and use matrix completion to recover the missing entries. The performances are evaluated by the relative error defined as $RE=\Vert \widehat{\bm{X}}-\bm{X}\Vert_F/\Vert \bm{X}\Vert_F$ \cite{NIPS2016_6357}, where $\widehat{\bm{X}}$ denotes the recovered matrix. As shown in Figure \ref{Fig_syn_off}, the recovery errors of LRMC methods, i.e. LRF \cite{sun2016guaranteed} and NNM \cite{CandesRecht2009}, are considerably high. In contrast, HRMC methods especially our KFMC have significantly lower recovery errors. In addition, our KFMC(Poly) and KFMC(RBF) are much more efficient than VMC \cite{pmlr-v70-ongie17a} and NLMC \cite{FANNLMC}, in which randomized SVD \cite{randomsvd} has been performed.

Figure \ref{Fig_syn_ol} shows the results of online matrix completion, in which OL-DLSR (dictionary learning and sparse representation) is an online matrix completion method we modified from \cite{mairal2009online} and \cite{FAN2018SFMC} and detailed in our supplementary material. We see that our method OL-KFMC outperformed other methods significantly. Figures \ref{Fig_syn_ose} shows the results of out-of-sample extension (OSE) of HRMC, in which our OSE-KFMC outperformed other methods. More details about the experiment/parameter settings and analysis are in the supplementary material.

\begin{figure}[h!]
\centering
\includegraphics[width=8cm]{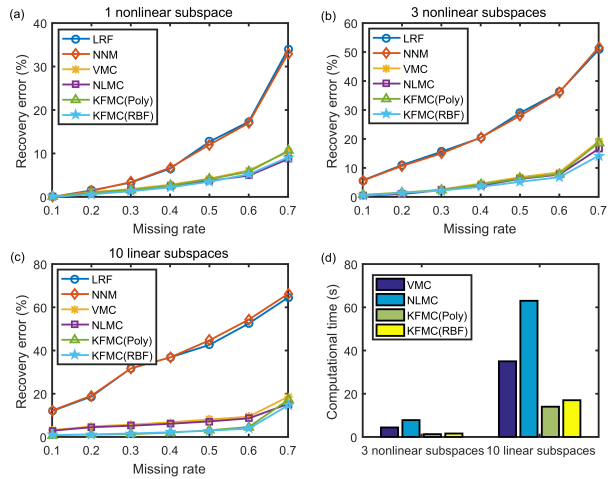}
\vspace{-2mm}
\caption{Offline matrix completion on synthetic data}
\label{Fig_syn_off}
\vspace{-1mm}
\end{figure}

\begin{figure}[h!]
\centering
\includegraphics[width=8cm]{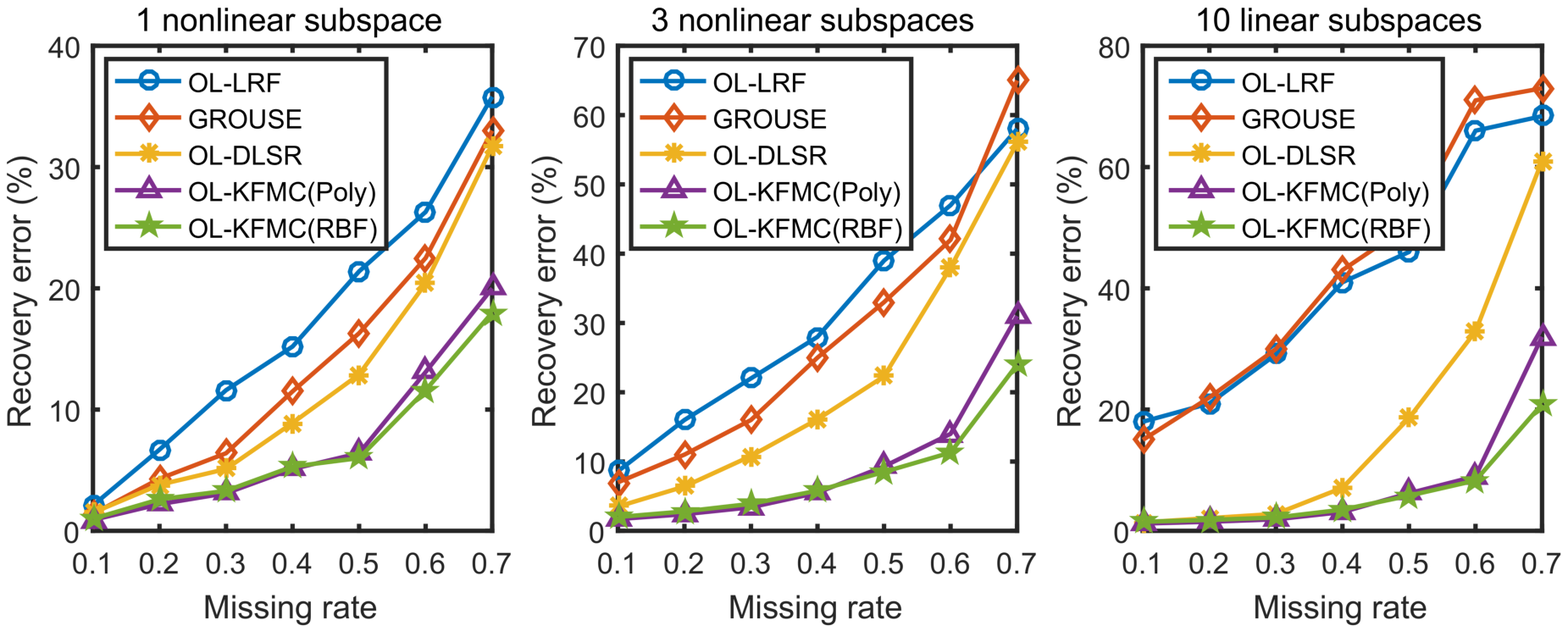}
\vspace{-2mm}
\caption{Online matrix completion on synthetic data}
\label{Fig_syn_ol}
\vspace{-1mm}
\end{figure}

\begin{figure}[h!]
\centering
\includegraphics[width=8cm]{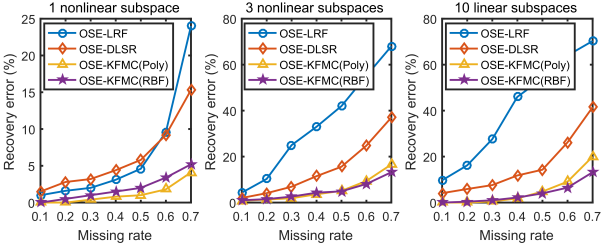}
\vspace{-2mm}
\caption{Out-of-sample extension of matrix completion on synthetic data}
\label{Fig_syn_ose}
\vspace{-1mm}
\end{figure}

\subsection{Hopkins155 data}
Similar to \cite{pmlr-v70-ongie17a}, we consider the problem of subspace clustering on incomplete data, in which the missing data of Hopkins155 \cite{4269999} were recovered by matrix completion and then SSC (sparse subspace clustering \cite{SSC_PAMIN_2013}) was performed. We consider two downsampled video sequences, \textit{1R2RC} and \textit{1R2TCR}, each of which consists of 6 frames. The average clustering errors \cite{SSC_PAMIN_2013} of 10 repeated trials are reported in Figure \ref{Fig_hopkins}. Our method KFMC with RBF kernel is more accurate and efficient than VMC and NLMC.

\begin{figure}[h!]
\centering
\includegraphics[width=8cm]{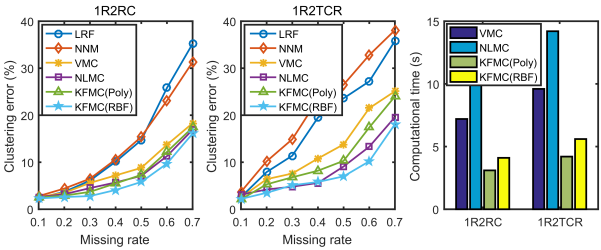}
\vspace{-2mm}
\caption{Subspace clustering on incomplete data}
\label{Fig_hopkins}
\vspace{-2mm}
\end{figure}

\subsection{CMU motion capture data}
We use matrix completion to recover the missing data of time-series trajectories of human motions (e.g. running and jumping). Similar to \cite{NIPS2016_6357,pmlr-v70-ongie17a}, we use the trial $\#6$ of subject $\#56$ of the CMU motion capture dataset, which forms a high rank matrix \cite{NIPS2016_6357}. We consider two cases of incomplete data, randomly missing and continuously missing. More details about the experimental settings are in the supplementary material. The average results of 10 repeated trials are reported in Figure \ref{Fig_cmu}. We see that HRMC methods outperformed LRMC methods while online methods outperformed offline methods. One reason is that the structure of the data changes with time (corresponding to different motions) and online methods can adapt to the changes. Comparing Figure \ref{Fig_cmu} with the Figure 4 of \cite{NIPS2016_6357}, we find that VMC, NLMC, and our KFMC outperformed the method proposed in \cite{NIPS2016_6357}. In addition, our OL-KFMC especially with RBF kernel is the most accurate one. Regarding the computational cost, there is no doubt that the linear methods including LRF, NNM, GROUSE, and DLSR are faster than other methods. Hence we only show the computational cost of the nonlinear methods in Table \ref{Tab_time} for comparison (randomized SVD \cite{randomsvd} has been performed in VMC and NLMC). Our KFMC is faster than VMC and NLMC while our OL-KFMC is at least 10 times faster than all methods.

\begin{figure}[h!]
\centering
\includegraphics[width=8cm]{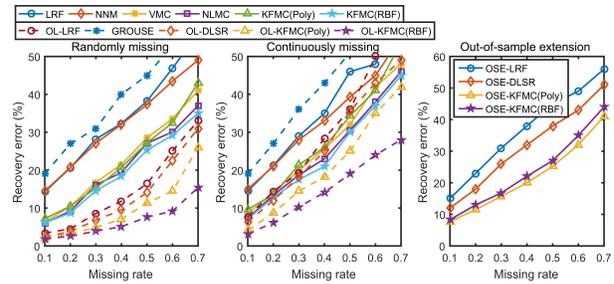}
\vspace{-3mm}
\caption{CMU motion capture data recovery}
\label{Fig_cmu}
\end{figure}

\vspace{-2mm}

\begin{table}[h!]
\centering
\begin{tabular}{p{2.2cm}p{2.42cm}p{2.42cm}}
\hline
{\footnotesize VMC} \qquad 370 & {\footnotesize NLMC } \qquad \ \ \ 610 & {\footnotesize KFMC}{\tiny (Poly)}  \quad \ \ \ 170 \\ \hline
{\footnotesize KFMC}{\tiny (RBF)} \ 190 &{\footnotesize OL-KFMC}{\tiny (Poly)} \ 16 &{\footnotesize OL-KFMC}{\tiny (RBF)} \ 19\\
\hline
\end{tabular}
\caption{Time cost (second) on CMU motion capture data}
\label{Tab_time}
\end{table}
\vspace{-3mm}

\section{Conclusion}
In this paper, we proposed kernelized factorization matrix completion (KFMC)\footnote{The codes are available at https://github.com/jicongfan/Online-high-rank-matrix-completion},
a new method for high rank matrix completion,
together with an online version and an out-of-sample extension,
which outperform state-of-the-art methods.
Our numerics demonstrate the success of the method for motion data recovery.
We believe our methods will also be useful for transductive learning (classification),
vehicle/robot/chemistry sensor signal denoising, recommender systems, and biomedical data recovery.
\section*{Acknowledgement}
This work was supported in part by DARPA Award FA8750-17-2-0101.

{\small
\bibliographystyle{ieee}
\bibliography{Ref_OLHRMC}
}

\appendix
\section*{Appendix}
\section{Proof of some lemmas}
\subsection{Proof for Lemma 3}
\noindent \textbf{Lemma 3.}
\textit{The update (8) is a relaxed Newton's method
and ensures sufficient decrease in the objective:
\[
\ell(\bm{Z},\bm{D}-\bm{\Delta}_{\scalebox{.6}{$\bm{D}$}},\bm{X})-\ell(\bm{Z},\bm{D},\bm{X})\leq-\tfrac{1}{2\tau}\textup{Tr}(\bm{g}_{\scalebox{.6}{$\bm{D}$}}\bm{H}_{\scalebox{.6}{$\bm{D}$}}^{-1}\bm{g}_{\scalebox{.6}{$\bm{D}$}}^\top).
\]}

\begin{proof}
With polynomial kernel, the objective function in terms of $\bm{D}$ is
\begin{equation}\label{Eq_objD}
\begin{aligned}
\ell(\bm{Z},\bm{D},\bm{X})=&-\text{Tr}((\bm{W}_1\odot(\bm{X}^T\bm{D}+c))\bm{Z})\\
&+\tfrac{1}{2}\text{Tr}(\bm{Z}^T(\bm{W}_2\odot(\bm{D}^T\bm{D}+c))\bm{Z})\\
&+\tfrac{\alpha}{2}\text{Tr}(\bm{W}_2\odot(\bm{D}^T\bm{D}+c)),
\end{aligned}
\end{equation} 
in which for simplicity we have omitted the terms not related to $\bm{D}$. In (\ref{Eq_objD}), $\bm{W}_1=\langle \bm{X}^T\bm{D}+c\rangle^{q-1}$, $\bm{W}_2=\langle\bm{D}^T\bm{D}+c\rangle^{q-1}$, and $\langle \cdot \rangle^u$ denotes the element-wise $u$-power of a vector or matrix. Using the idea of iteratively reweighted optimization, we fix $\bm{W}_1$ and $\bm{W}_2$, and get the derivative as
\begin{equation}
\bm{g}_{\scalebox{.6}{$\bm{D}$}}:=-\bm{X}(\bm{W}_1\odot\bm{Z}^T)+\bm{D}(\bm{Z}\bm{Z}^T\odot\bm{W}_2)+\alpha\bm{D}(\bm{W}_2\odot\bm{I}_r)).
\end{equation}
We approximate $\ell(\bm{Z},\bm{D},\bm{X})$ with its second order Taylor expansion around $\bm{D}_0$, i.e.,
\begin{equation}
\begin{aligned}
\ell(\bm{Z},\bm{D},\bm{X})=&\ell(\bm{Z},\bm{D}_0,\bm{X})+\langle\bm{g}_{\scalebox{.6}{$\bm{D}$}}, \bm{D}-\bm{D}_0\rangle\\
&+\tfrac{1}{2}\mbox{vec}(\bm{D}-\bm{D}_0)^T\bm{H}_{\scalebox{.6}{$\bm{D}$}}\mbox{vec}(\bm{D}-\bm{D}_0)+R_0,
\end{aligned}
\end{equation}
where $R_0=O(\tfrac{\Vert \ell^{(3)}\Vert}{6})$ denotes the residual of the approximation and $\bm{H}\in\mathbb{R}^{r^2\times r^2}$ denotes the Hessian matrix. We have
\begin{equation}
\bm{H}=\left[ 
\begin{matrix}
\bm{H}_{\scalebox{.6}{$\bm{D}$}} & \bm{0} & \cdots & \bm{0}\\
\bm{0} & \bm{H}_{\scalebox{.6}{$\bm{D}$}} & \cdots & \bm{0}\\
\vdots & \cdots &\ddots & \vdots\\
\bm{0} & \bm{0} &\cdots & \bm{H}_{\scalebox{.6}{$\bm{D}$}}
\end{matrix}
\right],
\end{equation}
where
\begin{equation}
\bm{H}_{\scalebox{.6}{$\bm{D}$}}:= \bm{Z}\bm{Z}^T\odot\bm{W}_2+\alpha\bm{W}_2\odot\bm{I}_r.
\end{equation}
One has $\mbox{vec}(\bm{D}-\bm{D}_0)^T\bm{H}\mbox{vec}(\bm{D}-\bm{D}_0)=\text{Tr}((\bm{D}-\bm{{D}_0})\bm{H}(\bm{D}-\bm{{D}_0})^T)$. Denote
\begin{equation}
\begin{aligned}
\ell'(\bm{Z},\bm{D},\bm{X})=&\ell(\bm{Z},\bm{D}_0,\bm{X})+\langle\bm{g}_{\scalebox{.6}{$\bm{D}$}}, \bm{D}-\bm{D}_0\rangle\\
+&\tfrac{\tau}{2}\text{Tr}((\bm{D}-\bm{{D}_0})\bm{H}_{\scalebox{.6}{$\bm{D}$}}(\bm{D}-\bm{{D}_0})^T),
\end{aligned}
\end{equation}
where $\tau>1$. Since $\bm{H}_{\scalebox{.6}{$\bm{D}$}}$ is positive definite, we have
\begin{equation}\label{Eq.supp_D01}
\ell(\bm{Z},\bm{D},\bm{X})\leq\ell'(\bm{Z},\bm{D},\bm{X}),
\end{equation}
provided that $\tau$ is large enough. We then minimize $\ell'$ by letting the derivative be zero and get
\begin{equation}\label{Eq.supp_D02}
\bm{D}=\bm{D}_0-\bm{\Delta}_{\scalebox{.6}{$\bm{D}$}}.
\end{equation}
where $\bm{\Delta}_{\scalebox{.6}{$\bm{D}$}}=\tfrac{1}{\tau}\bm{g}_{\scalebox{.6}{$\bm{D}$}}\bm{H}_{\scalebox{.6}{$\bm{D}$}}^{-1}$. Invoking (\ref{Eq.supp_D02}) into (\ref{Eq.supp_D01}), we have
\begin{equation}
\ell(\bm{Z},\bm{D}_0-\bm{\Delta}_{\scalebox{.6}{$\bm{D}$}},\bm{X})\leq \ell(\bm{Z},\bm{D}_0,\bm{X})-\tfrac{1}{2\tau}\text{Tr}(\bm{g}_{\scalebox{.6}{$\bm{D}$}}\bm{H}_{\scalebox{.6}{$\bm{D}$}}^{-1}\bm{g}_{\scalebox{.6}{$\bm{D}$}}^T).
\end{equation}
\end{proof}

\subsection{Proof for Lemma 4}
\noindent \textbf{Lemma 4.}
\textit{$\Vert\bm{X}(\bm{Z}^T\odot\bm{K}_{\scalebox{.6}{$\bm{X}\bm{D}_1$}})-\bm{X}(\bm{Z}^T\odot\bm{K}_{\scalebox{.6}{$\bm{X}\bm{D}_2$}})\Vert_F\leq\tfrac{c}{\sigma\sqrt{n}}\Vert \bm{X}\Vert_2\Vert\bm{D}_1-\bm{D}_2\Vert_F$, where $c$ is a small constant.}

\begin{proof}
Since $\bm{Z}=\min\limits_{\bm{Z}}\tfrac{1}{2}\Vert \phi(\bm{X})-\phi(\bm{D})\bm{Z}\Vert_F^2+\tfrac{\beta}{2}\Vert \bm{Z}\Vert_F^2$, we have
\begin{equation}
\bm{Z}=(\phi(\bm{D})^T\phi(\bm{D})+\beta\bm{I}_r)^{-1}\phi(\bm{D})^T\phi(\bm{X}).
\end{equation}
Denote $\phi(\bm{D})=\bm{U}\bm{S}\bm{V}^T=\bm{U}\text{diag}(\lambda_1,\cdots,\lambda_r)\bm{V}^T$ (the singular value decomposition), we have $\phi(\bm{X})=\bm{U}\hat{\bm{S}}\hat{\bm{V}}^T=\bm{U}\text{diag}(\hat{\lambda}_1,\cdots,\hat{\lambda}_r)\hat{\bm{V}}^T$ because $\phi(\bm{X})$ and $\phi(\bm{D})$ have the same column basis. Then
\begin{equation}
\begin{aligned}
\bm{Z}=&\bm{V}(\bm{S}^2+\beta\bm{I})^{-1}\bm{S}\hat{\bm{S}}\hat{\bm{V}}^T\\
&=\bm{V}\text{diag}(\tfrac{\lambda_1\hat{\lambda}_1}{\lambda_1^2+\beta},\cdots,\tfrac{\lambda_r\hat{\lambda}_r}{\lambda_r^2+\beta})\hat{\bm{V}}^T.
\end{aligned}
\end{equation}
Suppose $\beta$ is large enough, we have $\tfrac{\lambda_i\hat{\lambda}_i}{\lambda_i^2+\beta}<1$ for $i=1,\cdots,r$. It follows that $\Vert \bm{Z}\Vert_F^2<r$ and $E[z_{ij}^2]<\tfrac{1}{n}$, which indicates that
\begin{equation}
\left\{
\begin{matrix}
\sigma_z=E[\vert z_{ij}-\mu_z\vert]<\tfrac{1}{\sqrt{n}},\\
-\tfrac{1}{\sqrt{n}}<\mu_z=E[z_{ij}]<\tfrac{1}{\sqrt{n}}.
\end{matrix}
\right.
\end{equation}
According to Chebyshev's inequality, we have
\begin{equation}
\text{Pr}(\vert z_{ij}\vert>\tfrac{c_0}{\sqrt{n}}+\tfrac{1}{\sqrt{n}})<\tfrac{1}{c_0^2}.
\end{equation}
Therefore, $\vert z_{ij}\vert<\tfrac{c_0}{\sqrt{n}}$ holds with high probability provided that $c_0$ is large enough. Suppose $z_{ij}\sim \mathcal{N}(\mu_z,\sigma_z^2)$, we have 
\begin{equation}
\text{Pr}(\vert z_{ij}\vert>\tfrac{c_0}{\sqrt{n}})<e^{-0.5c_0^2}
\end{equation}
according to the upper bound of Q-function of normal distribution. Then using union bound, we obtain
\begin{equation}
\text{Pr}\big(\vert z_{ij}\vert<\tfrac{c_0}{\sqrt{n}},\forall (i,j)\big)<1-nre^{-0.5c_0^2},
\end{equation}
It is equivalent to
\begin{equation}
\text{Pr}\big(\vert z_{ij}\vert<\tfrac{c_1}{\sqrt{n}},\forall (i,j)\big)<1-\tfrac{1}{(nr)^{c_0-1}},
\end{equation}
where $c_1=\sqrt{2c_0\log(nr)}$.

On the other hand, the partial gradient of entry $(i,j)$ of $\bm{K}_{\scalebox{.6}{$\bm{X}\bm{D}$}}$ in terms of $\bm{D}_{:j}$ (the $j$-th column of $\bm{D}$) can be given by
\begin{equation}
\tfrac{\partial \bm{K}_{\scalebox{.6}{$\bm{X}\bm{D}$}}(i,j)}{\partial \bm{D}_{:j}}=-\tfrac{1}{\sigma^2}(\bm{X}_{:i}-\bm{D}_{:j})\exp(-\tfrac{\Vert \bm{X}_{:i}-\bm{D}_{:j}\Vert^2}{2\sigma^2}).
\end{equation}
Because $\vert x\exp(\tfrac{-x^2}{2\sigma^2})\vert\leq\sigma \exp(-0.5)<0.61\sigma$,  we have $\vert \tfrac{\partial \bm{K}_{\scalebox{.6}{$\bm{X}\bm{D}$}}(i,j)}{\partial \bm{D}_{kj}}\vert<\tfrac{c_2}{\sigma}$ for some constant $c_2$. Then
\begin{equation}
\Vert\bm{K}_{\scalebox{.6}{$\bm{X}\bm{D}_1$}}-\bm{K}_{\scalebox{.6}{$\bm{X}\bm{D}_2$}}\Vert_F\leq \tfrac{c_3}{\sigma}\Vert\bm{D}_1-\bm{D}_2\Vert_F
\end{equation}
some small constant $c_3$.

According to the above analysis, we get
\begin{equation}
\begin{aligned}
&\Vert\bm{X}(\bm{Z}^T\odot\bm{K}_{\scalebox{.6}{$\bm{X}\bm{D}_1$}})-\bm{X}(\bm{Z}^T\odot\bm{K}_{\scalebox{.6}{$\bm{X}\bm{D}_2$}})\Vert_F\\
\leq &\Vert \bm{X}\Vert_2\Vert\bm{Z}^T\odot(\bm{K}_{\scalebox{.6}{$\bm{X}\bm{D}_1$}}-\bm{K}_{\scalebox{.6}{$\bm{X}\bm{D}_2$}})\Vert_F\\
\leq &\tfrac{c_1}{\sqrt{n}}\Vert \bm{X}\Vert_2\Vert\bm{K}_{\scalebox{.6}{$\bm{X}\bm{D}_1$}}-\bm{K}_{\scalebox{.6}{$\bm{X}\bm{D}_2$}}\Vert_F\\
\leq &\tfrac{c_1}{\sqrt{n}}\tfrac{c_3}{\sigma}\Vert \bm{X}\Vert_2\Vert\bm{D}_1-\bm{D}_2\Vert_F\\
=&\tfrac{c}{\sigma\sqrt{n}}\Vert \bm{X}\Vert_2\Vert\bm{D}_1-\bm{D}_2\Vert_F.
\end{aligned}
\end{equation}
\end{proof}

\subsection{Proof for Lemma 5}
\noindent \textbf{Lemma 5.}
\textit{For sufficiently small $\eta$, Algorithm 1 converges to a stationary point.}

\begin{proof}
Denote the objective function of (7) by $\ell(\bm{Z},\bm{D},\bm{X})$, which is lower-bounded by at least 0. When $\eta=0$, as the three subproblems are well addressed and do not diverge, we have
$\ell(\bm{Z}_{t+1},\bm{D}_{t+1},\bm{X}_{t+1})<\ell(\bm{Z}_{t+1},\bm{D}_{t+1},\bm{X}_{t})<\ell(\bm{Z}_{t+1},\bm{D}_{t},\bm{X}_{t})<\ell(\bm{Z}_{t},\bm{D}_{t},\bm{X}_{t})$. It indicates that $\Delta_t=\ell(\bm{Z}_{t},\bm{D}_{t},\bm{X}_{t})-\ell(\bm{Z}_{t+1},\bm{D}_{t+1},\bm{X}_{t+1})\rightarrow 0$ when $t\rightarrow \infty$. When $\Delta_t=0$, the gradient of $\ell(\bm{Z}_{t},\bm{D}_{t},\bm{X}_{t})$ is 0. 
Then Algorithm 1 converges to a stationary point. 

When $\eta>0$ and take $\bm{D}$ as an example, because $\bm{\Delta}_{D,t}$ is not exact enough, we decompose $\bm{\Delta}_{D,t}$ as $\bm{\Delta}_{D,t}=c_t\bm{\Delta}_{D,t}^*+\bm{\Delta}_{D,t}'$, where $0<c_t<1$ and $\bm{\Delta}_{D,t}^*$ is nearly optimal at iteration $t$. Similarly, we have $\bm{\Delta}_{D,t-1}=c_{t-1}\bm{\Delta}_{D,t}^*+\bm{\Delta}_{D,t-1}'$. Then $\widehat{\bm{\Delta}}_t=(c_t+c_{t-1}\eta+\cdots+c_{0}\eta^{t})\bm{\Delta}_{D,t}^*+\bm{\epsilon}'$, where $\bm{\epsilon}'=\sum_{i=0}^t\eta^i\bm{\Delta}_{D,i}'$. $\bm{\epsilon}'$ could be small compared to $\bm{\Delta}_{D,t}'$ because the signs of elements of $\bm{\Delta}_{D,0}',\cdots,\bm{\Delta}_{D,t}'$ may change. Suppose $c_t$ and $\eta$ are small enough such that $c_t<c_t+c_{t-1}\eta+\cdots+c_{0}\eta^{t}<1$, then $\widehat{\bm{\Delta}}_t$ is closer than $\bm{\Delta}_t$ to $\bm{\Delta}_t^*$.  It indicates $\ell(\bm{Z}_{t+1},\bm{D}_{t}-\widehat{\bm{\Delta}}_t,\bm{X}_{t})<\ell(\bm{Z}_{t+1},\bm{D}_{t}-{\bm{\Delta}}_t,\bm{X}_{t})<\ell(\bm{Z}_{t+1},\bm{D}_{t},\bm{X}_{t})$. That is why the momentum can accelerate the convergence. 

More formally, take $\bm{D}$ with polynomial kernel as an example, in Lemma 3, we have proved 
$\ell(\bm{Z},\bm{D}-\bm{\Delta}_{\scalebox{.6}{$\bm{D}$}},\bm{X})-\ell(\bm{Z},\bm{D},\bm{X})\leq-\tfrac{1}{2\tau}\textup{Tr}(\bm{g}_{\scalebox{.6}{$\bm{D}$}}\bm{H}_{\scalebox{.6}{$\bm{D}$}}^{-1}\bm{g}_{\scalebox{.6}{$\bm{D}$}}^T)$. As $\bm{\Delta}_{\scalebox{.6}{$\bm{D}$}}=\tfrac{1}{\tau}\bm{g}_{\scalebox{.6}{$\bm{D}$}}\bm{H}_{\scalebox{.6}{$\bm{D}$}}^{-1}$, we have
\begin{equation*}
\ell(\bm{Z},\bm{D}-\bm{\Delta}_{\scalebox{.6}{$\bm{D}$}},\bm{X})-\ell(\bm{Z},\bm{D},\bm{X})\leq-\tfrac{\tau}{2}\textup{Tr}(\bm{\Delta}_{\scalebox{.6}{$\bm{D}$}}\bm{H}\bm{\Delta}_{\scalebox{.6}{$\bm{D}$}}^T).
\end{equation*}
When momentum is used, $\bm{\Delta}_{\scalebox{.6}{$\bm{D}$}}$ is replaced by $\bm{\Delta}_{\scalebox{.6}{$\bm{D}$}}+\eta\widehat{\bm{\Delta}}_{\scalebox{.6}{$\bm{D}$}}$. Using the Taylor approximation similar to Lemma 3, we have
\begin{equation}
\begin{aligned}
&\ell(\bm{Z},\bm{D}-\bm{\Delta}_{\scalebox{.6}{$\bm{D}$}}-\eta\widehat{\bm{\Delta}}_{\scalebox{.6}{$\bm{D}$}},\bm{X})\\
\leq&\ell(\bm{Z},\bm{D}-\bm{\Delta}_{\scalebox{.6}{$\bm{D}$}},\bm{X})+\langle\bm{G}_{\eta}, \eta\widehat{\bm{\Delta}}_{\scalebox{.6}{$\bm{D}$}}\rangle+\tfrac{\eta^2\tau}{2}\textup{Tr}(\widehat{\bm{\Delta}}_{\scalebox{.6}{$\bm{D}$}}\bm{H}\widehat{\bm{\Delta}}_{\scalebox{.6}{$\bm{D}$}}^T),
\end{aligned}
\end{equation}
where $\bm{G}_{\eta}$ denotes the partial derivative of $\ell$ at $\bm{D}-\bm{\Delta}_{\scalebox{.6}{$\bm{D}$}}$. It follows that
\begin{equation}
\begin{aligned}
&\ell(\bm{Z},\bm{D}-\bm{\Delta}_{\scalebox{.6}{$\bm{D}$}}-\eta\widehat{\bm{\Delta}}_{\scalebox{.6}{$\bm{D}$}},\bm{X})\\
\leq&\ell(\bm{Z},\bm{D}-\bm{\Delta}_{\scalebox{.6}{$\bm{D}$}},\bm{X})+\eta^2\tau\textup{Tr}(\widehat{\bm{\Delta}}_{\scalebox{.6}{$\bm{D}$}}\bm{H}\widehat{\bm{\Delta}}_{\scalebox{.6}{$\bm{D}$}}^T).
\end{aligned}
\end{equation}
If $\eta\widehat{\bm{\Delta}}_{\scalebox{.6}{$\bm{D}$}}$ is a descent value, we have 
\begin{equation}
\begin{aligned}
&\ell(\bm{Z},\bm{D}-\bm{\Delta}_{\scalebox{.6}{$\bm{D}$}}-\eta\widehat{\bm{\Delta}}_{\scalebox{.6}{$\bm{D}$}},\bm{X})\\
<&\ell(\bm{Z},\bm{D}-\bm{\Delta}_{\scalebox{.6}{$\bm{D}$}},\bm{X})\\
\leq&\ell(\bm{Z},\bm{D},\bm{X})-\tfrac{\tau}{2}\textup{Tr}(\bm{\Delta}_{\scalebox{.6}{$\bm{D}$}}\bm{H}\bm{\Delta}_{\scalebox{.6}{$\bm{D}$}}^T).
\end{aligned}
\end{equation}
Otherwise, we have 
\begin{equation}
\begin{aligned}
&\ell(\bm{Z},\bm{D}-\bm{\Delta}_{\scalebox{.6}{$\bm{D}$}}-\eta\widehat{\bm{\Delta}}_{\scalebox{.6}{$\bm{D}$}},\bm{X})\\
\leq&\ell(\bm{Z},\bm{D},\bm{X})-\tfrac{\tau}{2}\textup{Tr}(\bm{\Delta}_{\scalebox{.6}{$\bm{D}$}}\bm{H}\bm{\Delta}_{\scalebox{.6}{$\bm{D}$}}^T)+\eta^2\tau\textup{Tr}(\widehat{\bm{\Delta}}_{\scalebox{.6}{$\bm{D}$}}\bm{H}\widehat{\bm{\Delta}}_{\scalebox{.6}{$\bm{D}$}}^T).
\end{aligned}
\end{equation}
Since $\bm{H}$ is positive definite, we have
\begin{equation}
\ell(\bm{Z},\bm{D}-\bm{\Delta}_{\scalebox{.6}{$\bm{D}$}}-\eta\widehat{\bm{\Delta}}_{\scalebox{.6}{$\bm{D}$}},\bm{X})\leq\ell(\bm{Z},\bm{D},\bm{X})
\end{equation}
if $\eta$ is small enough. Then similar to the case of $\eta=0$, the convergence can be proved.
\end{proof}

\subsection{Proof for Lemma 6}
\noindent \textbf{Lemma 6.}
\textit{Updating $\bm{D}$ as $\bm{D}-\bm{\Delta}_{\scalebox{.6}{$\bm{D}$}}$ does not diverge and $\hat{\ell}(\bm{z},[\bm{x}]_{\bar{\omega}},\bm{D}-\bm{\Delta}_{\scalebox{.6}{$\bm{D}$}})-\hat{\ell}(\bm{z},[\bm{x}]_{\bar{\omega}},\bm{D})\leq -\tfrac{1}{2\tau\tau_0}\Vert\nabla_{\scalebox{.6}{$\bm{D}$}}\hat{\ell}\Vert_F^2$ provided that $\tau>1$, where $\tau_0=\Vert \bm{z}\bm{z}^T\odot\bm{W}_2+\alpha\bm{W}_2\odot\bm{I}_r\Vert_2$.}

\begin{proof}
Fixing $\bm{W}_1$ and $\bm{W}_2$, we have $\Vert\nabla_{\scalebox{.6}{$\bm{D}$}_1}\hat{\ell}-\nabla_{\scalebox{.6}{$\bm{D}$}_2}\hat{\ell}\Vert_F\leq \Vert q\bm{z}\bm{z}^T\odot\bm{W}_2+q\alpha\bm{W}_2\odot\bm{I}_r\Vert_2\Vert \bm{D}_1-\bm{D}_2\Vert_F$, which means the Lipschitz constant of $\hat{\ell}$'s gradient can be estimated as $\tau_0=\Vert q\bm{z}\bm{z}^T\odot\bm{W}_2+q\alpha\bm{W}_2\odot\bm{I}_r\Vert_2$. It follows that
\begin{equation}\label{Eq_supp_online_D1}
\begin{aligned}
\hat{\ell}(\bm{z},[\bm{x}]_{\bar{\omega}},\bm{D})\leq&\hat{\ell}(\bm{z},[\bm{x}]_{\bar{\omega}},\bm{D}_0)+\langle \nabla_{\scalebox{.6}{$\bm{D}$}}\hat{\ell}, \bm{D}-\bm{D}_0 \rangle\\
&+\tfrac{\tau\tau_0}{2}\Vert \bm{D}-\bm{D}_0\Vert_F^2,
\end{aligned}
\end{equation}
where $\tau>1$. We minimize the right part of (\ref{Eq_supp_online_D1}) and get 
\begin{equation}\label{Eq_supp_online_D2}
\bm{D}=\bm{D}_0-\tfrac{1}{\tau\tau_0}\nabla_{\scalebox{.6}{$\bm{x}$}}\hat{\ell}:= \bm{D}_0-\bm{\Delta}_{\scalebox{.6}{$\bm{D}$}}.
\end{equation}
Substituting (\ref{Eq_supp_online_D2}) into (\ref{Eq_supp_online_D1}), we have
\begin{equation}
\hat{\ell}(\bm{z},[\bm{x}]_{\bar{\omega}},\bm{D}_0-\bm{\Delta}_{\scalebox{.6}{$\bm{D}$}})-\hat{\ell}(\bm{z},[\bm{x}]_{\bar{\omega}},\bm{D}_0)\leq -\tfrac{1}{2\tau\tau_0}\Vert \nabla_{\scalebox{.6}{$\bm{D}$}}\hat{\ell}\Vert_F^2.
\end{equation}
\end{proof}

\subsection{Derivation for (36)}
As the number of observed entries in each column of $\bm{X}$ is $o_{\scalebox{.6}{$\bm{X}$}}=\rho m$, the number of observed entries in each column of $\phi(\bm{X})\in\mathbb{R}^{\bar{m}\times n}$ is 
\begin{equation}\label{Eq.sr_0}
o_{\phi(\bm{x})}=\tbinom{\rho m+q}{q},
\end{equation}
where $\phi$ is a $q$-order polynomial map. It is known that the number of observed entries in $\phi(\bm{X})$ should be larger than the number of degrees of freedom of $\phi(\bm{X})$, otherwise it is impossible to determine $\phi(\bm{X})$ uniquely among all rank-$r$ matrices of size $\bar{m}\times n$ \cite{pmlr-v70-ongie17a}. Then we require
\begin{equation}\label{Eq.sr_1}
no_{\phi(\bm{x})}>nr+(\bar{m}-r)r,
\end{equation}
where $\bar{m}=\binom{m+q}{q}$ and $r=\binom{d+pq}{pq}$. Substituting (\ref{Eq.sr_0}) into (\ref{Eq.sr_1}) and dividing both sides with $\bar{m}$, we get
\begin{equation}
\tfrac{\binom{\rho m+q}{q}}{\binom{m+q}{q}}>\dfrac{nr+(\bar{m}-r)r}{n\bar{m}}.
\end{equation}
Since 
\begin{equation}
\tfrac{\binom{\rho m+q}{q}}{\binom{m+q}{q}}=\tfrac{(\rho m+q)(\rho m+q-1)\cdots(\rho m+1)}{(m+q)(m+q-1)\cdots(m+1)},
\end{equation}
we have $\binom{\rho m+q}{q}/\binom{m+q}{q}\approx \rho^q$ for small $q$. It follows that
\begin{equation}\label{Eq.rho}
\begin{aligned}
\rho>&\bigl(\tfrac{nr+(\bar{m}-r)r}{n\bar{m}}\bigr)^{\tfrac{1}{q}}=\bigl(\tfrac{r}{n}+\tfrac{r}{\bar{m}}-\tfrac{r^2}{n\bar{m}}\bigr)^{\tfrac{1}{q}}\\
=&\bigl(\tfrac{u\binom{d+pq}{pq}}{n}+\tfrac{u\binom{d+pq}{pq}}{\binom{m+q}{q}}-\tfrac{u^2\binom{d+pq}{pq}^2}{n\binom{m+q}{q}}\bigr)^{\tfrac{1}{q}}\\
:=&\kappa(m,n,d,p,q,u).
\end{aligned}
\end{equation}

\subsection{Derivation for (37)}
We reformulate RBF kernel as
\begin{equation}\label{Eq.RBF2poly}
\begin{aligned}
k(\bm{x},\bm{y})&=\exp\bigl(-\tfrac{1}{2\sigma^2}(\Vert \bm{x}\Vert^2+\Vert \bm{y}\Vert^2)\bigr)\exp\left(\tfrac{1}{\sigma^2}\langle \bm{x},\bm{y}\rangle\right)\\
&:= C\sum_{k=0}^\infty\dfrac{\langle \bm{x},\bm{y}\rangle^k}{\sigma^{2k}k!}\\
&=C\sum_{k=0}^q\dfrac{\langle \bm{x},\bm{y}\rangle^k}{\sigma^{2k}k!}+O(\dfrac{c^{q+1}}{(q+1)!}),
\end{aligned}
\end{equation}
where $0<c<1$ provided that $\sigma^2>\vert \bm{x}^T\bm{y}\vert$ and $C=\exp\bigl(-\tfrac{1}{2\sigma^2}(\Vert \bm{x}\Vert^2+\Vert \bm{y}\Vert^2)\bigr)$. We see that RBF kernel can be approximated by a weighted sum of polynomial kernels with orders $0,1,\cdots,q$, where the error is $O(\tfrac{c^{q+1}}{(q+1)!})$. The feature map of the weighted sum is a $q$-order polynomial map, denoted by $\hat{\phi}$. Then it follows from (\ref{Eq.RBF2poly}) that
\begin{equation}
\phi(\bm{x})^T\phi(\bm{x})=\hat{\phi}(\bm{x})^T\hat{\phi}(\bm{x})+O(\tfrac{c^{q+1}}{(q+1)!}),
\end{equation}
and further
\begin{equation}
\phi_i(\bm{x})=\hat{\phi}_i(\bm{x})+O(\sqrt{\tfrac{c^{q+1}}{(q+1)!}}),
\end{equation}
in which we have assumed that the signs of $\phi_i(\bm{x})$ and $\hat{\phi}_i(\bm{x})$ are the same because it has no influence on the feature map.
It means the feature map $\phi$ of RBF kernel can be well approximated by a $q$-order polynomial map, where the approximation error is $O(\sqrt{\tfrac{c^{q+1}}{(q+1)!}})$ and could be nearly zero. Therefore, $\rho>\kappa(m,n,d,p,q,u)$ in (\ref{Eq.rho}) holds for RBF kernel with error $O(\sqrt{\tfrac{c^{q+1}}{(q+1)!}})$ in recovering $\phi(\bm{X})$. When $\phi(\bm{X})$ is recovered, $\bm{X}$ is naturally recovered because $\bm{X}$ itself is the first-order feature in $\phi(\bm{X})$.

\section{More about the experiments}
\subsection{An intuitive example}
We use a simple example of nonlinear data to intuitively show the performance of our high-rank matrix completion method KFMC. Specifically, we sample 100 data points from the following twisted cubic function
\begin{equation}\label{Eq_twistedF}
x_1=s,\ x_2=s^2,\ x_3=s^3,
\end{equation}
where $s\sim\mathcal{U}(-1,1)$. Then we obtain a $3\times 100$ matrix, which is of full-rank. For each data point (column), we randomly remove one entry. The recovery results of low-rank matrix completion and our KFMC are shown in Figure \ref{Fig_syn_twisted_1}. We see that LRMC absolutely failed because it cannot handle full-rank matrix. On the contrary, our KFMC recovered the missing entries successfully. The performance of KFMC at different iteration is shown in Figure \ref{Fig_syn_twisted_2}, which demonstrated that KFMC shaped the data into the curve gradually. It is worth mentioning that when we remove two entries of each column of the matrix,  KFMC cannot recovery the missing entries because the number of observed entries is smaller than the latent dimension of the data.

\begin{figure}[h!]
\centering
\includegraphics[width=9cm,trim={40 0 0 0},clip]{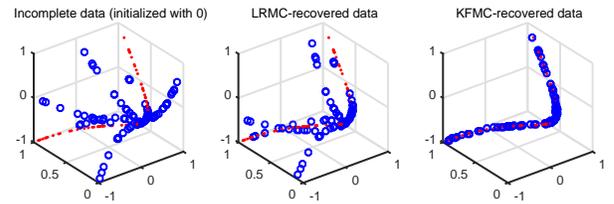}
\vspace{-15pt}
\caption{Recovery result on data drawn from (\ref{Eq_twistedF}) (the red points are the complete data)}
\label{Fig_syn_twisted_1}
\end{figure}
\vspace{-0pt}

\begin{figure}[h!]
\centering
\includegraphics[width=9cm,trim={40 0 0 0},clip]{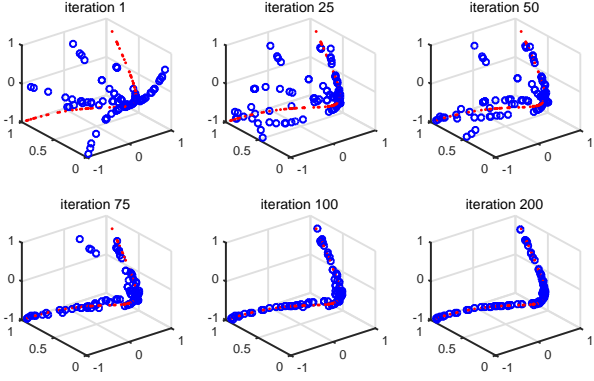}
\vspace{-25pt}
\caption{KFMC recovery performance at different iteration on data drawn from (\ref{Eq_twistedF})}
\label{Fig_syn_twisted_2}
\end{figure}

\subsection{Compared methods and parameter settings}
For offline matrix completion, our KFMC with polynomial kernel and KFMC with RBF kernel are compared with the following methods.
\begin{itemize}
\item[]\textbf{LRF} (low-rank factorization based matrix completion \cite{sun2016guaranteed}). LRF is solved by alternating minimization. The matrix rank and regularization parameter are searched within a broad range such that the best results are reported in our experiments. 
\item[]\textbf{NNM} (nuclear norm minimization \cite{CandesRecht2009}). NNM is solved by inexact augmented lagrange multiplier \cite{LinChenMa2013} and has no parameter to determine beforehand. Therefore it is good baseline for our evaluation.
\item[]\textbf{VMC} (algebraic variety model for matrix completion \cite{pmlr-v70-ongie17a}). In VMC, second-order polynomial kernel is used, where the hyper-parameter is chosen from $\lbrace 1,10,100\rbrace$. The parameter of Schatten-$p$ norm is set as 0.5, which often performs the best. To reduce its computational cost, randomized SVD  \cite{randomsvd}, instead of full SVD, is performed.
\item[]\textbf{NLMC} (nonlinear matrix completion \cite{FANNLMC}). In NLMC, RBF kernel is used. The parameter $\sigma$ of the kernel is chosen from $\lbrace 0.5\bar{d},1\bar{d},3\bar{d}\rbrace$, where $\bar{d}$ is the average distance of all pair-wise data points. The parameter of Schatten-$p$ norm is set as 0.5 and randomized SVD is also performed.
\end{itemize}
In our KFMC(Poly) method, second order polynomial kernel is used, where the hyper-parameter is set as 1. The regularization parameters $\alpha$ and $\beta$ are chosen from $\lbrace 0.01,0.1\rbrace$. In our KFMC(RBF), the setting of parameter $\sigma$ is the same as that of NLMC. The regularization parameter $\beta$ is chosen from $\lbrace 0.001,0.0001\rbrace$ while $\alpha$ does not matter. The parameter $r$ of KFMC(Poly) and KFMC(RBF) are chosen from $\lbrace 0.5m,1m,2m\rbrace$, where $m$ is the row dimension of the matrix. 

For online matrix completion, the parameter setting of OL-KFMC is similar to that of KFMC. Our OL-KFMC(Poly) and OL-KFMC(RBF) are compared with the following methods.
\begin{itemize}
\item[]\textbf{GROUSE} \cite{balzano2010online}\footnote{http://web.eecs.umich.edu/$\sim$girasole/?p=110}. The learning rate and matrix rank are searched within large ranges to provide the best performances.
\item[]\textbf{OL-DLSR} (online dictionary learning and sparse representation based matrix completion). OL-DLSR is achieved by integrating \cite{mairal2009online} with \cite{FAN2018SFMC}. It solves the following problem
\begin{equation}
\mathop{\text{minimize}}_{\bm{D}\in\mathcal{C},\bm{z}}\tfrac{1}{2}\Vert \bm{\omega}\odot(\bm{x}-\bm{D}\bm{z})\Vert^2+\lambda\Vert \bm{z}\Vert_1
\end{equation}
for a set of incomplete data columns $\lbrace\bm{x}\rbrace$. $\bm{\omega}$ is a binary vector with $\omega_i=0$ if entry $i$ of $\bm{x}$ is missing and $\omega_i=1$ otherwise.  According to \cite{FAN2018SFMC}, the method can recover high-rank matrices online when the data are drawn from a union of subspaces. We determine $\lambda$ and the number of columns of $\bm{D}$ carefully to give the best performances of OL-DLSR in our experiments.
\item[]\textbf{OL-LRF} (online LRF \cite{sun2016guaranteed,jin2016provable}). OL-LRF is similar to OL-DLSR. The only difference is that $\Vert \bm{z}\Vert_1$ is replaced by $\dfrac{1}{2}\Vert \bm{z}\Vert_F^2$. In OL-LRF, the matrix rank is carefully determined to give the best performances in our experiments.
\end{itemize}

For out-of-sample extension of matrix completion, our OSE-KFMC is compared with the following two methods.
\begin{itemize}
\item[]\textbf{OSE-LRF} First, perform SVD on a complete training data matrix, i.e., $\bm{X}=\bm{U}\bm{S}\bm{V}^T$, where $\bm{U}\in\mathbb{R}^{m\times r}$, $\bm{S}\in\mathbb{R}^{r\times r}$, $\bm{V}\in\mathbb{R}^{n\times r}$, and $r=\text{rank}(\bm{X})$. For a new incomplete data column $\bm{x}$, the missing entries are recovered as
\begin{equation}
\bm{x}_{\bar{\omega}}=\bm{U}_{\bar{\omega}}(\bm{U}_{\omega}^T\bm{U}_{\omega}+\lambda\bm{I}_{\vert\omega\vert})^{-1}\bm{U}_{\omega}^T\bm{x}_{\omega},
\end{equation}
where $\omega$ denotes the locations of observed entries, $\bar{\omega}$ denotes the locations of missing entries, $\lambda$ is a small constant, and $\bm{U}_{\bar{\omega}}$ consists of the rows of $\bm{U}$ corresponding to $\bar{\omega}$.
\item[]\textbf{OSE-DLSR} First, a dictionary $\bm{D}$ is learned by the method of \cite{mairal2009online} from the training data. Given a new incomplete data $\bm{x}$, we can obtain the sparse coefficient as 
\begin{equation}
\bm{z}=\min\limits_{\bm{z}}\tfrac{1}{2}\Vert\bm{\omega}\odot(\bm{x}-\bm{D}\bm{z})\Vert^2+\lambda\Vert\bm{z}\Vert_1.
\end{equation}
Finally, the missing entries of $\bm{x}$ can be recovered as $\bm{x}_{\bar{\omega}}=\bm{D}_{\bar{\omega}}\bm{z}$.
\end{itemize}

The experiments are conducted with MATLAB on a computer with Inter-i7-3.4GHz Core and 16 GB RAM. The maximum iteration of each offline matrix completion method is 500, which is often enough to converge or give a high recovery accuracy. It also provides a baseline to compare the computational costs of VMC, NLMC, and KFMC.

\subsection{Synthetic data}
Take the case of three nonlinear subspaces as an example, the optimization curves of our KFMC with different momentum parameter $\eta$ are shown in Figure \ref{Fig_syn_off_iter}. We see that a larger $\eta$ can lead to a faster convergence. Particularly, compared with KFMC(Poly), KFMC(RBF) requires fewer iterations to converge, while in each iteration the computational cost of the former is a little bit higher than that of the latter. In this paper, we set $\eta=0.5$ for all experiments.

\begin{figure}[h!]
\centering
\includegraphics[width=8cm]{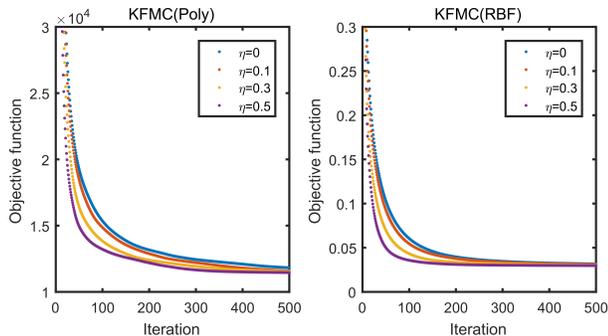}
\caption{Optimization of KFMC with different momentum parameter $\eta$}
\label{Fig_syn_off_iter}
\end{figure}

Figure \ref{Fig_syn_on_cost} shows the online KFMC's iterative changes of empirical cost function
\begin{equation}\label{Eq.mc_online_emperical_2}
g_t(\bm{D}):=\dfrac{1}{t}\sum_{j=1}^t\ell([\bm{x}_j]_{\omega_j},\bm{D})
\end{equation}
and empirical recovery error
\begin{equation}\label{Eq.mc_online_emperical_error}
e_t(\bm{x}):=\dfrac{1}{t}\sum_{j=1}^t\dfrac{\Vert\bm{x}_j-\hat{\bm{x}}_j\Vert}{\Vert\bm{x}_j\Vert},
\end{equation}
where $\hat{\bm{x}}_j$ denotes the recovered column and $t$ is the number of online samples. At the beginning of the online learning ($t$ is small), the recover errors and the values of cost function are high. With the increasing of $t$, the recover errors and the values of cost function decreased. In practice, we can re-pass the data to reduce the recovery errors. In addition, when $t$ is large enough and the structure of the data is assumed to be fixed, we do not need to update $\bm{D}$. If the data structure changes according to time, we can just update $\bm{D}$ all the time in order to adapt to the changes.

\begin{figure}[h!]
\centering
\includegraphics[width=8cm]{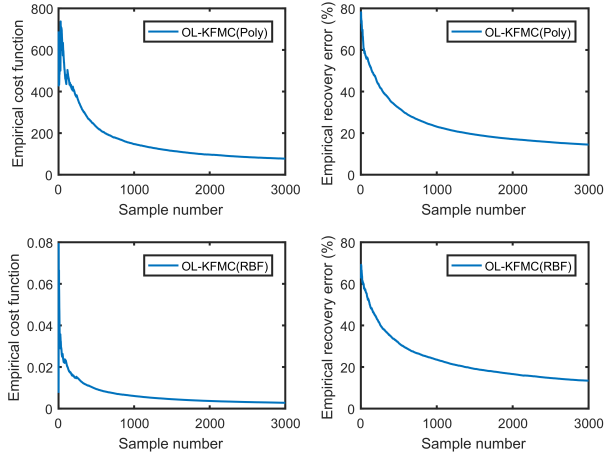}
\caption{Empirical cost function and recovery error of online KFMC}
\label{Fig_syn_on_cost}
\end{figure}

In our experiments of online matrix completion, the reported recovery errors are the results after the data matrix was passed for 10 times. Figure \ref{Fig_syn_on_pass} shows the matrix completion errors of different number of passes. Our OL-KFMC(Poly) and OL-KFMC(RBF) have the lowest recovery errors. The recovery errors of OL-LRF and GROUSE are considerably high because they are low-rank methods but the matrix in the experiment is of high-rank.

\begin{figure}[h!]
\centering
\includegraphics[width=7cm]{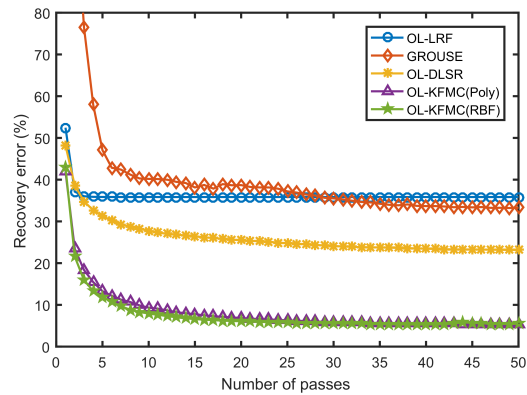}
\caption{Matrix completion errors of different passes}
\label{Fig_syn_on_pass}
\end{figure}

\subsection{Real data}
For the experiments of subspace clustering on incomplete data of Hopkins 155 datasets \cite{4269999}, similar to \cite{pmlr-v70-ongie17a}, we conducted the following procedures. First, the two subsets of video sequences, \textit{1R2RC} and \textit{1R2TCR}, were uniformly downsampled to 6 frames. Then the sizes of the resulted data matrices are $12\times 459$ and $12\times 556$. Second, we randomly removed a fraction ($10\%\sim 70\%$) of the entries of the two matrices and then perform matrix completion to recover the missing entries. Finally, SSC (sparse subspace clustering \cite{SSC_PAMIN_2013}) were performed to segment the data into different clusters. For fair comparison, the parameter $\lambda$ in SSC were chosen from $\lbrace 1,10,100\rbrace$ and the best results were reported.

For the CMU motion capture data, similar to \cite{NIPS2016_6357,pmlr-v70-ongie17a}, we use the trial $\#6$ of subject $\#56$ of the dataset, which is available at \textit{http://mocap.cs.cmu.edu/}. The data consists of the following motions: throw punches, grab, skip, yawn, stretch, leap, lift open window, walk, and jump/bound. The data size is $62\times 6784$. The data of each motion lie in a low-rank subspace and the whole data matrix is of full-rank \cite{NIPS2016_6357}. To reduce the computational cost and increase the recovery difficulty, we sub-sampled the data to $62\times 3392$. We considered two types of missing data pattern. The first one is randomly missing, for which we randomly removed $10\%$ to $70\%$ entries of the matrix. The second one is continuously missing, which is more practical and challenging. Specifically, for each row of the matrix, the missing entries were divided into 50 missing sequences, where the sequences are uniformly distributed and the length of each sequence is about $68\delta$. Here $\delta$ denotes the missing rate. The two missing data patterns are shown in Figure \ref{Fig_syn_on_pass}, in which the black pixel or region denotes the missing entries. For the online recovery, the number of passes for OL-LRF, GROUSE, OL-DLSR, and OL-KFMC are 10, 50, 10, and 5 respectively. The reason for this setting is that GROUSE requires large number of passes while the other methods especially our OL-KFMC requires fewer passes.

\begin{figure}[h!]
\centering
\includegraphics[width=7cm]{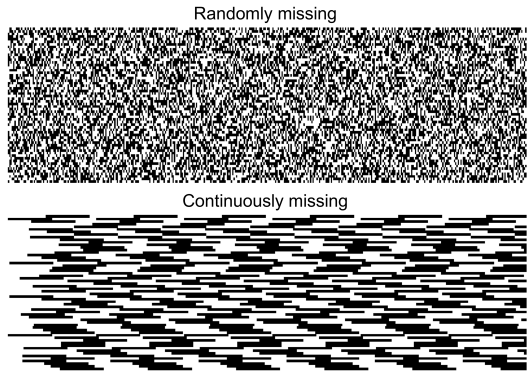}
\caption{Two missing data patterns for motion capture data}
\label{Fig_syn_on_pass}
\end{figure}

\end{document}